\PassOptionsToPackage{dvipsnames, table}{xcolor}

\documentclass{article}

\newif\ifpublic
\publictrue
\ifpublic
    \usepackage[main, preprint]{neurips_2026}
\else
    \usepackage{neurips_2026}
\fi
\usepackage{color,amsmath,amssymb, amsthm}
\usepackage{xspace}
\usepackage{enumitem}
\usepackage{hyperref}
\usepackage{overpic}
\usepackage{cleveref}
\usepackage{algorithm}
\usepackage{algpseudocode} 

\usepackage[utf8]{inputenc} % allow utf-8 input
\usepackage[T1]{fontenc}    % use 8-bit T1 fonts
\usepackage{hyperref}       % hyperlinks
\usepackage{url}            % simple URL typesetting
\usepackage{booktabs}       % professional-quality tables
\usepackage{amsfonts}       % blackboard math symbols
\usepackage{nicefrac}       % compact symbols for 1/2, etc.
\usepackage{microtype}      % microtypography
\usepackage{xcolor}         % colors
\usepackage{graphicx}       % includegraphics support
\usepackage{booktabs}
\usepackage{pifont}
\usepackage{adjustbox}
\usepackage{titletoc}
\usepackage{thmtools}
\usepackage{thm-restate}
\usepackage{longtable}
\usepackage{multirow}
\usepackage{colortbl}
\usepackage{array}
\usepackage{makecell}
\usepackage{wrapfig}
\usepackage[most]{tcolorbox}

\theoremstyle{plain}

\newtheorem{lemma}{Lemma}

\newtheorem*{theorem*}{Theorem}

\newif\ifshowcomments
\showcommentstrue % flip to \showcommentsfalse to hide
\ifshowcomments
  \newcommand{\gf}[1]{\textcolor{red}{GF: #1}}
  \newcommand{\ce}[1]{\textcolor{magenta}{CE: #1}}
  \newcommand{\mw}[1]{\textcolor{orange}{MW: #1}}
\else
  \newcommand{\gf}[1]{}
  \newcommand{\ce}[1]{}
  \newcommand{\mw}[1]{}
\fi

\newcommand{\inputprompts}{D}
\newcommand{\promptset}{\mathcal{D}}
\newcommand{\prunedset}{\mathcal{D}'}
\newcommand{\prompt}{S}
\newcommand{\decodetoken}{v_{i_k}}

\newcommand{\model}{\phi}
\renewcommand{\P}{\mathbb{P}}
\newcommand{\E}{\mathbb{E}}
\newcommand{\algo}{NightVision\xspace}
\newcommand{\hidden}{hidden dimension\xspace}
\newcommand{\params}{parameter count\xspace}
\newcommand{\depth}{depth\xspace}

\newcommand{\hd}{d}
\newcommand{\dep}{L}
\newcommand{\param}{P}
\newcommand{\dhat}{\hat{\hd}}
\newcommand{\Lhat}{\hat{\dep}}
\newcommand{\Phat}{\hat{\param}}
\newcommand{\dtrue}{d^{\star}}

\newcommand{\vocab}{\mathcal V}
\newcommand{\vocabsize}{|\vocab|}
\newcommand{\commonset}{\mathcal C}

\newcommand{\tokenset}{\mathcal T}

\newcommand{\zqi}{z_q^{(i)}}
\newcommand{\pqi}{p_q^{(i)}}
\newcommand{\yqi}{y_q^{(i)}}

\newcommand{\Mlp}{\mathbf Y}
\newcommand{\Mlpfull}{\mathbf Y_{\mathrm{full}}}
\newcommand{\Mlpstar}{\mathbf Y^{\star}}
\newcommand{\Mcarlini}{\mathbf Q}
\newcommand{\Mgram}{\mathbf G}

\newcommand{\pvec}{\mathbf p}
\newcommand{\zvec}{\mathbf z}

\newcommand{\pvecS}{\mathbf p_{\prompt}}
\newcommand{\zvecS}{\mathbf z_{\prompt}}
\newcommand{\yvecS}{\mathbf y_{\prompt}}
\newcommand{\simplex}{\Delta^{\vocabsize-1}}

\newcommand{\dffn}{d_{\mathrm{ffn}}}
\newcommand{\dkv}{d_{\mathrm{kv}}}
\newcommand{\nheads}{n_{\mathrm{heads}}}
\newcommand{\nkv}{n_{\mathrm{kv}}}

\newcommand{\Etot}{E_{\mathrm{tot}}}
\newcommand{\Eact}{E_{\mathrm{act}}}
\newcommand{\ttft}{TTFT\xspace}
\newcommand{\gqa}{GQA\xspace}
\newcommand{\moe}{MoE\xspace}

\newcommand{\eigi}{\lambda_i}

\newcommand{\slope}{\tilde s_i}
\newcommand{\isad}{i_{\mathrm{sad}}}
\newcommand{\ipeak}{i_{\mathrm{peak}}}
\newcommand{\aoi}{AOI\xspace}

\newcommand{\Tij}{T_{i,j}}
\newcommand{\Tref}{T_{i,j}^{\mathrm{ref}}}
\newcommand{\Ttgt}{T_{i,j}^{\mathrm{tgt}}}
\newcommand{\betaref}{\beta^{\mathrm{ref}}}
\newcommand{\alpharef}{\alpha^{\mathrm{ref}}}

\newcommand{\gammatgt}{\gamma^{\mathrm{tgt}}}
\newcommand{\zetaref}{\zeta^{\mathrm{ref}}}
\newcommand{\zetatgt}{\zeta^{\mathrm{tgt}}}
\newcommand{\transferk}{k}
\newcommand{\refset}{\mathcal F}
\newcommand{\refsetref}{\mathcal F^{\mathrm{ref}}}
\newcommand{\refsettgt}{\mathcal F^{\mathrm{tgt}}}

\title{Black-Box Inference of LLM Architectural Properties with Restrictive API Access}

\author{%
  Christopher Ellis\thanks{Equal contribution} \\
  Carnegie Mellon University\\
  Pittsburgh, PA 15213 \\
  \texttt{crellis@andrew.cmu.edu} \\
  \And
  Shreyas Chaudhari$^*$ \\
  Carnegie Mellon University\\
  Pittsburgh, PA 15213 \\
  \texttt{shreyasc@andrew.cmu.edu} \\
  \AND
  Mei-Yu Wang$^*$ \\
  Pittsburgh Supercomputing Center\\
  Pittsburgh, PA 15213 \\
  \texttt{mwang7@psc.edu} \\
  \And
  Leighton Barnes \\
  Carnegie Mellon University\\
  Pittsburgh, PA 15213 \\
  \texttt{leightonb@cmu.edu} \\
  \And
  Giulia Fanti \\
  Carnegie Mellon University\\
  Pittsburgh, PA 15213 \\
  \texttt{gfanti@andrew.cmu.edu} \\
  \And
  Jos\'{e} M. F. Moura \\
  Carnegie Mellon University\\
  Pittsburgh, PA 15213 \\
  \texttt{moura@andrew.cmu.edu} \\
}

\begin{document}

\maketitle

\begin{abstract}
In practice, most commercial LLM providers do not publicly release details of underlying LLM architectures. 
However, prior work has shown that given limited API access to an LLM (namely, top-$k$ logits and/or a logit bias function), one can recover certain architectural details of an LLM, such as the \hidden of the feed-forward network.
Perhaps in response to these results, most commercial LLM providers have restricted their APIs to expose only the single logit for each decoded token, and they no longer give users the ability to bias logits. 
We show that even under current restrictive APIs, several architectural parameters are still recoverable. 
We present \algo, an attack that uses restrictive black-box API access to estimate the \hidden, \depth, and \params of an LLM.
Algorithmically, \algo relies on a novel \emph{common set} prompting technique in which multiple prompts expose log probabilities for the same set of output tokens;
a spectral analysis of these results is used to infer \hidden.
\algo additionally uses end-to-end time to first token (\ttft) measurements and the estimated \hidden to estimate \depth and \params.
We empirically evaluate \algo on 32 open-source LLMs, recovering \hidden to within $23\%$ average relative error across all models ($9\%$ on \moe models), and \depth and \params to within $53\%$ for models exceeding
three billion parameters.
We run extensive ablations to demonstrate how these accuracies scale with token budget and model properties. 
Overall, our results suggest that current LLM APIs are not sufficiently restricted to fully obfuscate the architectural details of their underlying models. 
\end{abstract}

\section{Introduction}
\label{sec:intro}
Can attackers infer architectural properties of a large language model, e.g., its hidden dimension, depth, or total parameter count, from API access alone? This question is of great interest to both LLM developers and servers, who may seek to keep their hyperparameters and associated model intellectual property secret \citep{tramer2016stealing}. It is also relevant to security researchers and digital forensics experts, who may need to characterize unknown LLM APIs encountered in the wild. The feasibility of black-box architectural inference thus has direct implications for both API design \citep{hartenstein2025bridging, liu2026exploring} and model auditing \citep{amirizaniani2024auditllm, mokander2024auditing, rastogi2023supporting}.

Recent black-box attacks \citep{zhao2025survey} have shown that hidden dimension and final-layer weights (up to orthogonal transformation) can be recovered via spectral analysis of output logits \citep{carlini2024stealing, finlayson2024logits}. These attacks critically depend on either \emph{full log probabilities} or a \emph{logit-bias} parameter that allows the attacker to reconstruct full logit vectors from top-$k$ probabilities. In response, as modern language model APIs continue to evolve \citep{chauvin2025log}, providers have started to restrict access to both top-$k$ logits and logit bias \citep{carlini2024stealing}, rendering these attacks inapplicable on such APIs. A natural question is whether architectural inference remains feasible under a more restrictive API that models those of current frontier models. We answer affirmatively. Assuming only the log probability of a \emph{single} decoded token at each step (the minimal probabilistic information consistent with modern LLM APIs) together with end-to-end response timing, which cannot easily be hidden, we recover not only the hidden dimension but also depth and total parameter count, within a margin of error.

\textbf{Contributions.} Our contributions are threefold and summarized below. Overall, our results demonstrate that meaningful inference of sensitive architectural properties remains possible even under restrictive API access, and that closing this channel will require defending against timing side channels in addition to logit access.

\noindent
\emph{(1) Algorithmic:} We introduce \algo, a two-pronged inference algorithm that uses black-box API access to jointly recover the \hidden, \depth, and \params of transformer-based LLM architectures. 

The first prong of \algo introduces \emph{common-set prompting}, a novel method that recovers the hidden dimension under single-logit API access, requiring neither logit bias nor top-$k$ access. Second, we develop a \emph{timing-based recovery} procedure for depth and parameter count that exploits the distinctive scaling of prefill-dominated transformer inference with depth, sequence length, and hidden dimension; this stage uses our hidden-dimension estimate as a building block. 

\emph{(2) Theoretical:} We analytically bound in Section \ref{sec:theory} the number of API calls needed to successfully recover the requisite ``common set'' used in \algo. Via a
concentration argument on the size of the intersection of randomly-sampled
token sets across multiple prompts, we derive a sample-complexity result
that quantifies how many samples per prompt are required to form a
common set of a target size, or equivalently, the largest hidden
dimension recoverable with high probability under a fixed sample budget.

\emph{(3) Experimental:}
We evaluate \algo on 32 open-source LLMs, showing that it recovers the hidden dimension within $23\%$ mean relative error (exact recovery in 4 of 32 cases, and within $10\%$ in 12 of 32). Conditioned on an accurate hidden dimension, \algo recovers depth and total parameter count within ${\sim}53\%$ mean relative error on $\geq 3$B-parameter models.

The accuracy of our estimates depends in part on the \emph{budget} of tokens given to \algo. 
We also provide a fine-grained characterization of how estimation accuracy scales with budget and architecture properties. 
\section{Technical Preliminaries}
\label{sec:prelim}
\paragraph{Architectural Parameters of LLMs.}
Most LLMs can be modeled as stacks of $L$ transformer decoder layers
\citep{vaswani2017attention}. 
In these architectures, each transformer block operates on hidden representations of
a fixed dimension $d$. We refer to $L$ as the \emph{depth}, $d$ as the
\emph{hidden dimension}, and $P$ as the \emph{parameter count} totaled over the entire architecture; together
they determine inference cost, memory footprint, and capability, and are
commonly held proprietary by LLM providers. The hidden dimension $d$
admits a particularly clean recovery target: the next-token logits are
produced as $\zvec = \mathbf{W}\mathbf{h}$, where
$\mathbf{h} \in \mathbb{R}^d$ is the last-layer hidden state and
$\mathbf{W} \in \mathbb{R}^{\vocabsize \times d}$ is the output
embedding matrix over a vocabulary $\vocab$. Since
$\vocabsize \gg d$ in practice, the matrix of logits collected across
many prompts has rank at most $d$. This \emph{softmax bottleneck} is the
structural fact exploited by prior attacks
\citep{carlini2024stealing,finlayson2024logits} and by our spectral
method in \Cref{sec:spectral}.

\paragraph{Threat Model.}
Consider a \emph{target model} $\phi$ that consists of stacked transformer blocks, each with the same \hidden $d$ and \depth $L$, as is common in modern LLM architectures. Given API access to $\phi$, our goal is to estimate $d$, $L$, and \params $\param$.
Given an input sequence $\prompt = (x_1, \dots, x_{t-1})$, an LLM produces a
next-token logit vector $\zvecS \in \mathbb{R}^{\vocabsize}$ with entries
$z_{\prompt}^{(i)}$. Many APIs expose a temperature parameter $\tau > 0$ that rescales
the logits prior to the softmax, yielding a probability vector
$\pvecS(\tau) = \mathrm{softmax}(\zvecS/\tau) \in \simplex$
with entries
$p_{\prompt}^{(i)}(\tau)\!=\!\exp(z_{\prompt}^{(i)}/\tau) / \sum_{j} \exp(z_{\prompt}^{(j)}/\tau)
\! =\! \P(x_t = v_i \mid x_1, \dots, x_{t-1})$
and a corresponding log-probability vector $\yvecS(\tau)$ with entries
$y_{\prompt}^{(i)}(\tau)\!=\!\log p_{\prompt}^{(i)}(\tau)$;
larger $\tau$ flattens the distribution toward uniform, and $\tau = 1$ recovers
the unscaled distribution $\pvecS \triangleq \pvecS(1)$. We use $v_i$ for $i\in[\vocabsize]$ to denote the $i$th token label. We operate in a restricted
black-box setting in which none of $\zvecS$, $\zvecS(\tau)$, $\pvecS$, or $\pvecS(\tau)$
is observed. Once the model decodes a next token $v_{i_0}$ by sampling under
$\pvecS(\tau)$, only the realized log-probability
$y_{\prompt}^{(i_0)}(\tau)$ is revealed. When prompts are indexed as $\prompt_q$ (as in our common-set construction in \Cref{sec:common_set}), we abbreviate $\zqi \triangleq z_{\prompt_q}^{(i)}$, $\pqi \triangleq p_{\prompt_q}^{(i)}$, and $\yqi \triangleq y_{\prompt_q}^{(i)}$. Hence, our threat models are strictly weaker (give less information to the attacker) than those of \citet{carlini2024stealing} and \citet{finlayson2024logits},
which additionally grant top-$k$ entries of $\pvecS(\tau)$ or
logit-bias control prior to the final softmax. Our common-set
prompting method (\Cref{sec:common_set}) benefits from flatter
distributions, so we use $\tau = 2$ (the cap on most
current APIs)

\section{Related Work}
\label{sec:related}
\citet{carlini2024stealing} and \citet{finlayson2024logits} concurrently showed that an LLM's hidden dimension can be recovered from API access by exploiting the softmax bottleneck: the matrix of output logits across many prompts has rank equal to the hidden dimension, which can be readily determined via singular value decomposition. \citet{carlini2024stealing} further recover the final output layer up to orthogonal transformations. Crucially, both methods depend on a logit-bias parameter to reconstruct full logit vectors from the API’s top-$k$ log-probability outputs (with $k \geq 1$) \citep{morris2023language}, and both are rendered ineffective once this functionality is removed. We provide a detailed algorithmic comparison between our method and that of \citet{carlini2024stealing} in \Cref{sec:common_set}.

\begin{table}[htpb]
	\centering
	\footnotesize
	\caption{Prior work on black-box recovery of architectural properties from production LLM APIs. The left block shows the access the attack \emph{requires}; the right block shows what it \emph{recovers}. All prior attacks that recover hidden dimension require a logit-bias parameter (used to \emph{reconstruct} full logit vectors from a single decoded log-probability, which is why ``Full logits'' is not checked), a feature that has been removed or restricted by major providers in response.}
	\label{tab:prior_work}
	\setlength{\tabcolsep}{5pt}
	\begin{tabular}{@{}lccccc|ccc@{}}
		\toprule
		& \multicolumn{5}{c|}{\textbf{Required access}} & \multicolumn{3}{c}{\textbf{Recovered}} \\
		\cmidrule(lr){2-6} \cmidrule(lr){7-9}
		\textbf{Work}
		& \makebox[2.2em]{\shortstack{\scriptsize Text\\\scriptsize only}}
		& \makebox[2.2em]{\shortstack{\scriptsize Full\\\scriptsize logits}}
		& \makebox[2.2em]{\shortstack{\scriptsize Logit\\\scriptsize bias}}
		& \makebox[2.2em]{\shortstack{\scriptsize Single \\\scriptsize logprob}}
		& \makebox[2.6em]{\shortstack{\scriptsize Inference\\\scriptsize timing}}
		& Hidden & Depth & \#Params \\
		\midrule
		\citet{carlini2024stealing}      &            &            & \checkmark & \checkmark &            &    \checkmark        &  &            \\
		\citet{finlayson2024logits}      &            &            & \checkmark & \checkmark &            &    \checkmark        &  &            \\
		\citet{li2026ikp}                & \checkmark &            &            &            &            &            &            & \checkmark \\
		\textbf{\algo (basic common-set)}               &            &            &            & \checkmark &  & \checkmark & &  \\
		\textbf{\algo (with timing)}                &            &            &            & \checkmark & \checkmark & \checkmark & \checkmark & \checkmark \\
		\bottomrule
	\end{tabular}
\end{table}

More recently, \citet{li2026ikp} take an alternative approach, in which rather than probing the model's internals, they attempt to probe what it has learned. Their method, \emph{Incompressible Knowledge Probes} (IKPs), tests a model on obscure factual questions whose answers cannot be inferred from general reasoning and must instead be memorized in the model's weights. Since storing more facts requires more parameters, a model's accuracy on these probes serves as a proxy for its size, once calibrated against open-weight models of known size. Table~\ref{tab:prior_work} positions our work against these methods by threat model, namely, the API access each attack requires, and which architectural properties it recovers. Our method requires the strictly weakest probabilistic API access of any approach that recovers the hidden dimension, and it is the only method that jointly recovers depth, hidden dimension, and parameter count.
Due to space constraints, we discuss additional related work in Appendix \ref{app:related}.

\section{\algo: Algorithm and Analysis}
\label{sec:methods}
\begin{wrapfigure}[8]{r}{0.44\textwidth}
    \centering
	\vspace{-0.5in}
	\includegraphics[width=0.9\linewidth]{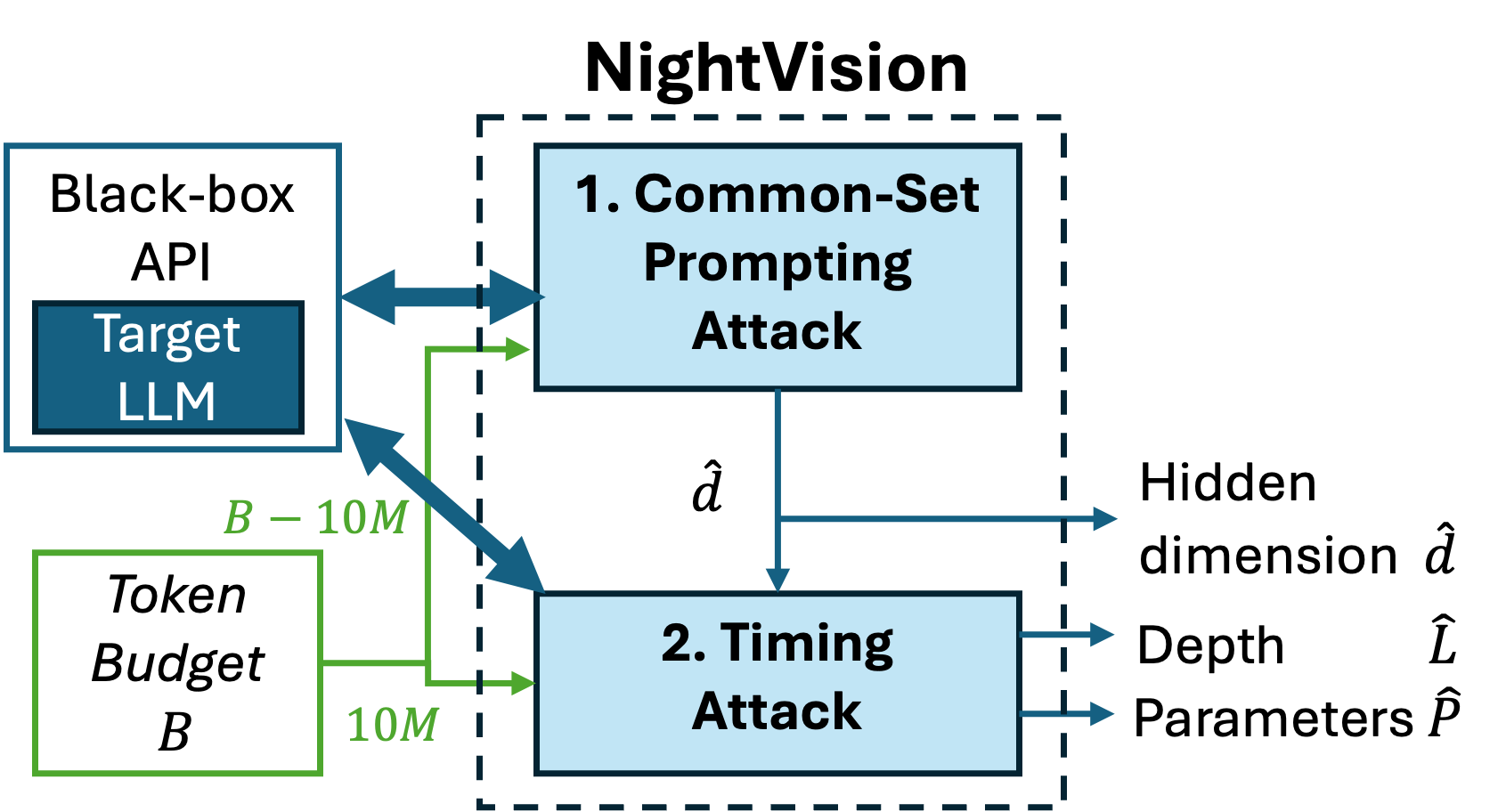}
	\caption{\footnotesize \algo overview}
	\label{fig:overview}
\end{wrapfigure}

We first give an overview of \algo, as illustrated in Figure \ref{fig:overview}, followed by a detailed explanation of its components.
\algo proceeds in two phases. The first phase, which we call the  \emph{common-set prompting attack}, is used to 
recover the \hidden $d$ (\Cref{sec:common_set}). 
The second phase, which we refer to as the \emph{timing attack} recovers the \params and \depth, given an estimate for \hidden $\dhat$ obtained in phase 1 (\Cref{sec:timing}).
Given a total budget of $B$ tokens for the attack (we find that $B > 2 \times 10^{11}$ tokens are required in practice for LLMs with billions of parameters), we allocate the first $10$ million tokens to the timing attack, and the rest to the common-set prompting attack.

\subsection{Phase 1: \algo Common-Set Prompting Attack}
\label{sec:common_set}

We first review the attack of \citet{carlini2024stealing} under top-1 logit access and explain why it is infeasible under modern APIs. We then present our common-set prompting attack (\Cref{sec:common-set}) and analyze its sample complexity (\Cref{sec:theory}). 

\definecolor{stagebg}{RGB}{235, 235, 240}
\definecolor{stagefg}{RGB}{60, 60, 90}
\definecolor{carlinihl}{RGB}{178, 34, 52}
\definecolor{ourshl}{RGB}{29, 96, 158}
\definecolor{boxrule}{RGB}{180, 180, 190}
\newcommand{\hlC}[1]{\textcolor{carlinihl}{\textbf{#1}}}
\newcommand{\hlO}[1]{\textcolor{ourshl}{\textbf{#1}}}

\newcommand{\stagebanner}[1]{%
	\par\vspace{0.1pt}%
	\noindent\colorbox{stagebg}{%
		\makebox[\linewidth][c]{%
			\rule[-0.05em]{0pt}{0.1em}\textcolor{stagefg}{\scriptsize\itshape #1}%
		}%
	}%
	\par\vspace{0.1pt}%
}

\begin{figure}[htpb]
	\centering
	\begin{tcolorbox}[
		colback=white, colframe=white, boxrule=0.6pt, arc=2pt,
		left=2pt, right=2pt, top=0pt, bottom=2pt,
		boxsep=0pt,
		]
		
		\noindent
		\begin{minipage}[t]{0.495\linewidth}
			\colorbox{carlinihl}{\makebox[\linewidth][c]{%
					\rule[-0.05em]{0pt}{0.1em}\color{white}\bfseries\scriptsize \citet{carlini2024stealing}: Top-$1$ binary logit-bias attack}}
		\end{minipage}\hfill
		\begin{minipage}[t]{0.495\linewidth}
			\colorbox{ourshl}{\makebox[\linewidth][c]{%
					\rule[-0.05em]{0pt}{0.1em}\color{white}\bfseries\scriptsize \algo  Phase 1: Common-set prompting attack}}
		\end{minipage}

		\par\vspace{1pt}
		\noindent
		\begin{minipage}[t]{0.495\linewidth}
			\begin{algorithmic}[1]
				\scriptsize
				\Require \\
				Top-$1$ logprob API \hlC{with binary bias $b \in \{-1, 0\}$}\\
				Prompts $\promptset=\{\prompt_q\}_{q=1}^{n}$ with $n > d$
			\end{algorithmic}
		\end{minipage}\hfill
		\begin{minipage}[t]{0.495\linewidth}
			\begin{algorithmic}[1]
				\scriptsize
				\Require \\
				Single sampled logprob API \hlO{(no logit bias)}\\
				\hlO{High-entropy} prompts $\promptset=\{\prompt_q\}_{q=1}^{\inputprompts}$, number of samples $N$, temperature $\tau$
			\end{algorithmic}
		\end{minipage}

		\stagebanner{Stage 1: Collect Outputs}
		
		\noindent
		\begin{minipage}[t]{0.495\linewidth}
			\begin{algorithmic}[1]
				\scriptsize
				\setcounter{ALG@line}{2}
				\For{$q = 1, \dots, n$}
				\State \hlC{Query unbiased}; record top token $r$ and its logprob $y_{\mathrm{top}}$
				\For{{each $t \in \vocab \setminus \{r\}$}}
				\State \hlC{Query biased}; record logprob $y'_{\mathrm{top}}$
				\State \hlC{Recover prob.}: $p_q^{(t)} = \dfrac{\exp(y_{\mathrm{top}} - y'_{\mathrm{top}}) - 1}{1/e - 1}$
				\State \hlC{Convert to logit}: $z_q^{(t)} = \log p_q^{(t)} - y_{\mathrm{top}}$
				\EndFor
				\EndFor
			\end{algorithmic}
		\end{minipage}\hfill
		\begin{minipage}[t]{0.495\linewidth}
			\begin{algorithmic}[1]
				\scriptsize
				\setcounter{ALG@line}{2}
				\For{$q = 1, \dots, \inputprompts$}
				\For{\hlO{$k = 1, \dots, N$}}
				\State \hlO{Sample at temperature \ $\tau$}; record $\left(\decodetoken,\, \yqi\right)$
				\EndFor
				\State {$\tokenset_q \gets \{\text{unique tokens observed}\}$}
				\EndFor
				
			\end{algorithmic}
		\end{minipage}
		
		\stagebanner{Stage 2: Form Matrix}
		
		\noindent
		\begin{minipage}[t]{0.495\linewidth}
			\begin{algorithmic}[1]
				\scriptsize
				\setcounter{ALG@line}{8}
				\State \hlC{Build $\Mcarlini \in \mathbb{R}^{n \times |\vocab|}$} with $\Mcarlini_{q,t} = z_q^{(t)}$
			\end{algorithmic}
		\end{minipage}\hfill
		\begin{minipage}[t]{0.495\linewidth}
			\begin{algorithmic}[1]
				\scriptsize
				\setcounter{ALG@line}{8}
				\State \hlO{Common set $\commonset$  and common prompts $\prunedset$ formed via greedy pruning of sparse $\inputprompts{\times}|\vocab|$ matrix (\Cref{sec:ctm})}
				\State \hlO{Build $\Mlp \in \mathbb{R}^{D' \times |\commonset|}$} with $Y_{q,i} = y_q^{(v_i)}$ for $v_i \in \commonset$
			\end{algorithmic}
		\end{minipage}
		
		\stagebanner{Stage 3: Estimate $d$ via spectrum}
		
		\noindent
		\begin{minipage}[t]{0.495\linewidth}
			\begin{algorithmic}[1]
				\scriptsize
				\setcounter{ALG@line}{10}
				\State \hlC{Compute SVD of ${\Mcarlini}$}; extract $\sigma_1 \geq \sigma_2 \geq \cdots$
				\State \Return {$\dhat$ at the largest gap in $\{\sigma_i\}$}
			\end{algorithmic}
		\end{minipage}\hfill
		\begin{minipage}[t]{0.495\linewidth}
			\begin{algorithmic}[1]
				\scriptsize
				\setcounter{ALG@line}{10}
				\State \hlO{Eigendecompose $\Mgram = \Mlp\Mlp^{\top}$}; extract $\lambda_1 \geq \lambda_2 \geq \cdots$
				\State \Return \hlO{$\dhat$ via saddle-then-peak inflection on $a_i = \arctan(\eigi)$ (saddle at $a_i\!\le\!\arctan(\pi)$, peak at $a_i\!\le\!\arctan(\pi/2)$; \Cref{sec:spectral})}
			\end{algorithmic}
		\end{minipage}
	\end{tcolorbox}
	\caption{Comparison of our common-set prompting attack to Top-1 binary logit-bias attack of \citet{carlini2024stealing}.}
	\label{fig:algo_comparison}
	\vspace{-0.1in}
\end{figure}

\paragraph{The attack of \citet{carlini2024stealing}}
\label{sec:carlini-challenges}

The attack of \citet{carlini2024stealing} follows a three-stage procedure, illustrated in Figure~\ref{fig:algo_comparison} (left): (1) collect outputs from prompts, (2) assemble a logit matrix, and (3) estimate $\dhat$ from its singular values.
In stage~(1), they query the target LLM $\phi$ with a set $\promptset$ of $n>d$ distinct prompts.
For each prompt $q$, they record the top token $r$ and its log-probability $y_{\mathrm{top}}$. Then, for each non-top token $t$, they submit a query with a logit bias of $-1$ on $t$, which suppresses $t$ and yields a new top log-probability $y'_{\mathrm{top}}$ for $r$. The shift $y_{\mathrm{top}} - y'_{\mathrm{top}}$ depends on the original probability of $t$, and inverting it in closed form gives $p_q^{(t)} = (\exp(y_{\mathrm{top}} - y'_{\mathrm{top}}) - 1)/(1/e - 1)$, the probability of $t$ under prompt $q$.
In stage~(2), these probabilities are converted to logits and collected into an $n \times |\vocab|$ matrix $\Mcarlini$, with one row per prompt and one column per token in $\vocab$.
In stage~(3), the singular values of $\Mcarlini$ recover the hidden dimension: the rank shows up as a pronounced drop in the ordered singular values (\Cref{app:changepoint}), at an index that typically matches the true $d$.
This attack is infeasible in our threat model, where logit bias is unavailable: without it, individual token probabilities cannot be recovered and $\Mcarlini$ cannot be constructed.

\subsubsection{Common-Set Prompting}
\label{sec:common-set}
Our common-set prompting attack follows the same three high-level stages as \citet{carlini2024stealing}: (1) output collection, (2) matrix formation, and (3) spectral estimation. Each stage is instantiated differently, as shown in \Cref{fig:algo_comparison}.

\paragraph{(1) Output Collection: Common-Set Prompting}
Our goal is to construct a matrix $\Mlp$ analogous to $\Mcarlini$ from \citet{carlini2024stealing}, in which each row corresponds to a prompt from a set $\prunedset$, each column corresponds to a token from a set $\commonset \subseteq \vocab$, and entries are log-probabilities of tokens $c\in \commonset$ under prompts $\prompt\in \prunedset$.
Without logit bias, a token's log-probability is only visible when the model decodes it. We pick $\prunedset$ and $\commonset$ jointly so that, under repeated sampling of $\phi$, \emph{every} token in $\commonset$ appears at least once for \emph{every} prompt in $\prunedset$.
Spectral recovery requires $|\commonset|>d$, so we seek high-entropy input prompts likely to produce a large vocabulary of output tokens.

Specifically, we start with a set $\promptset=\{\prompt_q\}_{q=1}^{\inputprompts}$ of $\inputprompts$ prompts chosen to induce high-entropy distributions over the next token (details below). For each prompt $\prompt_q \in \promptset$, we draw $N$ independent samples from $\phi(\prompt_q)$ at temperature $\tau$ (concrete values for $\inputprompts$, $N$, and $\tau$ are listed in \Cref{app:hyperparams}).
Each call $k\in [N]$ returns a single decoded token $\decodetoken$ sampled from the
temperature-scaled distribution $\pvec(\tau)$, together with its log-probability $y_q^{(i_k)}$ under that distribution. No other entries of the output distribution are revealed. 

Let $\tokenset_q \subseteq \vocab$
denote the set of unique tokens decoded under prompt $\prompt_q$ across its $N$ samples, $\tokenset_q \triangleq \cup_{k\in [N]} \{\decodetoken\}$. We define the \emph{common set} as $\commonset \;\triangleq \; \cap_{q\in [\inputprompts]} \tokenset_{q}$, i.e., the set of tokens decoded at least once under every prompt.

The number of samples $N$ required to form a common set of a given size depends on the next-token distributions. 

Increasing temperature $\tau$ alone is typically insufficient to drive $|\commonset|$ above $d$, so prompt selection becomes the dominant lever for vocabulary coverage (\Cref{app:entropy}). We find that out-of-distribution token sequences (inputs the model has likely not seen during training) reliably produce diffuse predictions. 

Our automated prompt search (\Cref{app:entropy}) draws candidates from character groups including Latin, CJK, emoji, Arabic, Hebrew, Braille, and mathematical symbols.

\paragraph{(2) Sampling and Common Token Matrix Construction}
\label{sec:ctm}

We populate a sparse log-probability matrix $\hat{\Mlpfull} \in \mathbb{R}^{D \times |\vocab|}$ with $[\hat{\Mlpfull}]_{q,i} = \yqi$ for $v_i \in \tokenset_q$ and $\mathrm{NaN}$ elsewhere. Different prompts cover different parts of the vocabulary, so $\hat{\Mlpfull}$ contains many missing entries and cannot be fed to the spectral step directly. We apply greedy NaN-pruning (\Cref{app:greedy}) to extract a dense submatrix: the surviving columns form the \emph{common set} $\commonset$, and the surviving rows define a subset of prompts $\prunedset \subseteq \promptset$ retained for spectral analysis. Let $D' = |\prunedset|$ denote the post-pruning prompt count; the resulting dense submatrix $\Mlp \in \mathbb{R}^{D' \times |\commonset|}$ feeds the spectral step.

\paragraph{(3) Spectral Hidden-Dimension Estimation}
\label{sec:spectral}

We recover $d$ from $\Mlp$ via spectral analysis, following \citet{carlini2024stealing}. The singular values of $\Mlp$ are noisy, so we work instead with the Gram matrix $\Mgram = \Mlp\Mlp^\top$ (or $\Mlp^\top\Mlp$ when $D' > |\commonset|$) to amplify the signal. Its eigendecomposition $\Mgram = \mathbf{U}\,\boldsymbol{\Lambda}\,\mathbf{U}^\top$ yields ordered eigenvalues $\lambda_1 \geq \lambda_2 \geq \cdots \geq \lambda_{\min(D',|\commonset|)} \geq 0$.
The leading eigenvalues span several orders of magnitude while the tail is comparatively flat, so the elbow rule of \citet{carlini2024stealing} does not isolate a clean inflection. We use a custom change-point detector (\Cref{app:changepoint}): a transform of the spectrum exposes the head-to-tail transition, two geometric landmarks bracket $\dhat$ from above and below, and the point of maximum curvature within that bracket is taken as $\dhat$. The detector exposes a few hyperparameters (slope half-window, smoothing window, prominence fraction); the values used throughout the evaluation are listed in \Cref{app:hyperparams}.
Finally, we snap $\dhat$ to a grid spacing $g = 128$, since hidden dimensions of deployed models are almost always multiples of powers of two.

\subsubsection{Theoretical Analysis: Sample Complexity of the Common-Set Attack}
\label{sec:theory}
We bound the number of samples per prompt sufficient to construct a common-set matrix large enough for spectral recovery of the hidden
dimension. The spectral method (\Cref{sec:spectral}) operates on a matrix
$\Mlp \in \mathbb{R}^{D' \times |\commonset|}$ whose rank reflects the hidden dimension $d$, so recovery requires both $D' \geq d$ and $|\commonset| \geq d$. The first condition is controlled by the
attacker via the size $D \ge D'$ of the prompt collection; the second is a probabilistic event determined by how many samples are drawn per prompt. We hence study the sample size $N$ required to ensure $|\commonset| \geq d$ with high probability. 

\begin{restatable}{theorem}{SampleComplexity}[Sample complexity for spectral recovery]
	\label{thm:uniform-sample-complexity}
	Let $\vocab$ be a finite vocabulary of size $\vocabsize$, and let
	$\tokenset_1, \dots, \tokenset_D$ be $D$ independent random subsets of $\vocab$, each
	formed by drawing $N$ tokens uniformly at random with replacement from
	$\vocab$. Let $\commonset \triangleq \tokenset_1 \cap \cdots \cap \tokenset_D$ denote their common intersection, and $d \leq D$ be a target threshold. If
	\begin{equation}
		N \geq \frac{\ln\!\left[1 - \left(\frac{d + \sqrt{2d\ln(1/\delta)} + 2\ln(1/\delta)}{\vocabsize}\right)^{1/D}\right]}{\ln(1 - 1/\vocabsize)}
		\label{eq:uniform-sample-complexity-bound}
	\end{equation}
	then $|\commonset| \geq d$ with probability at least $1 - \delta$.
\end{restatable}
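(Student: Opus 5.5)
We write $|\commonset| = \sum_{v \in \vocab} X_v$, where $X_v = \mathbf{1}\{v \in \tokenset_q \ \forall q\}$. For a single token $v$, each of the $N$ draws misses $v$ with probability $1 - 1/\vocabsize$. Hence
\[
\P(v \in \tokenset_q) = 1 - (1 - 1/\vocabsize)^N \triangleq r ,
\]
and since the $D$ subsets are independent, $\P(X_v = 1) = r^D \triangleq p$. This gives $\E|\commonset| = \vocabsize\, p$. The plan is to show that the mean $\mu = \vocabsize p$ is large enough that a lower-tail concentration bound gives $\P(|\commonset| < d) \le \delta$. We then translate the requirement on $\mu$ into the stated condition on $N$.

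\textbf{Step 1 (concentration).} The $X_v$ are not independent, because within one prompt the occupancy events for different tokens are negatively correlated. Each $\tokenset_q$ arises from balls-into-bins sampling. The indicators $\mathbf 1\{v \in \tokenset_q\}$ are monotone functions of the bin counts, and those counts are negatively associated (Dubhashi--Ranjan). The $X_v$ are products over independent prompts of nondecreasing functions of these negatively associated families. Closure of negative association under independent unions and disjoint monotone functions then gives that $\{X_v\}$ is negatively associated. Chernoff lower-tail bounds therefore apply as if the $X_v$ were independent:
\[
\P(|\commonset| \le (1-\epsilon)\mu) \le \exp(-\epsilon^2 \mu/2).
\]
This justification is the step I expect to be the main obstacle. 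If it fails, the fallback is a Poissonization argument, or a Freedman/martingale bound, which yields the same sub-Gaussian lower tail up to constants.

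\textbf{Step 2 (choose $\mu$).} We set $\epsilon^2 \mu/2 = \ln(1/\delta)$ and require $(1-\epsilon)\mu \ge d$. Writing $L = \ln(1/\delta)$, this condition is $\mu - \sqrt{2L\mu} \ge d$. Solving the quadratic in $\sqrt{\mu}$ gives
\[
\sqrt{\mu} \ge \sqrt{L/2} + \sqrt{L/2 + d},
\]
so $\mu \ge d + L + \sqrt{L^2 + 2Ld}$. Using $\sqrt{L^2 + 2Ld} \le L + \sqrt{2Ld}$, it suffices that
\[
\mu \ge d + \sqrt{2dL} + 2L \triangleq M .
\]
This matches the numerator in \eqref{eq:uniform-sample-complexity-bound}.

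\textbf{Step 3 (invert for $N$).} We need $\vocabsize\, r^D \ge M$, that is, $1 - (1 - 1/\vocabsize)^N \ge (M/\vocabsize)^{1/D}$. Equivalently,
\[
(1 - 1/\vocabsize)^N \le 1 - (M/\vocabsize)^{1/D}.
\]
Taking logarithms and dividing by $\ln(1 - 1/\vocabsize) < 0$ reverses the inequality and yields exactly \eqref{eq:uniform-sample-complexity-bound}. This step implicitly needs $M < \vocabsize$ so that the logarithm is defined; I would state that requirement as a standing assumption. The hypothesis $d \le D$ plays no role in the argument and is only relevant to the spectral step.
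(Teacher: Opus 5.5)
Your proposal is correct and follows essentially the same route as the paper: you establish negative association of the intersection indicators via Dubhashi--Ranjan together with closure under independent unions and disjoint monotone functions, apply the NA Chernoff lower tail, relax the quadratic in $\sqrt{\mu}$ to $\mu \ge d+\sqrt{2d\ln(1/\delta)}+2\ln(1/\delta)$, and invert for $N$. The only difference is cosmetic: you start from NA of the per-prompt occupancy counts, while the paper starts from NA of the per-draw one-hot vectors, and the two are equivalent base families. Your side remarks are also accurate: the bound needs $d+\sqrt{2d\ln(1/\delta)}+2\ln(1/\delta)<\vocabsize$ to be well defined, and the hypothesis $d\le D$ is never used.
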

(Proof in \Cref{app:proofs})
Expanding
\eqref{eq:uniform-sample-complexity-bound} (also in \Cref{app:proofs}) shows that the total sufficient query budget roughly 
scales as $DN = O(\vocabsize D\ln D)$.
\Cref{thm:uniform-sample-complexity} assumes uniformity, an idealization
of the high-entropy prompts produced by our search
(\Cref{sec:common_set}). \Cref{cor:t-flat-sample-complexity} in
\Cref{app:proofs} generalizes the bound to non-uniform distributions over a sub-vocabulary $\vocab' \subseteq \vocab$. The same form holds with $\vocabsize$ and $1/\vocabsize$ in the numerator and denominator respectively replaced by $|\vocab'|$ and
$t$ in the appropriate places. We empirically validate the
predicted scaling against measured common-set sizes in
\Cref{sec:eval-common-set}.

\subsection{Phase 2: \algo Timing Attack}
\label{sec:timing}
In Phase 2, \algo uses end-to-end inference time, from API call to token return, as a complementary side-channel signal to infer \depth and \params.

The timing attack (full pseudocode in \Cref{fig:algo_timing}) has three sub-phases: (1) timing collection, (2) \depth estimation, and (3) \params inference.
Our attack assumes access to a collection of $K$ models $\mathcal{F} = \{\model_i\}_{i=1}^{K}$, which could be obtained from open-source models. These are used to learn a timing scaling relation, which is applied to our target model $\model$. 

\paragraph{(1) Timing Collection}
Transformer-based LLM inference comprises two stages: \textbf{prefill}, in which the model processes the input prompt to build hidden states and populate the per-layer key-value (KV) cache, and \textbf{decode}, in which tokens are generated autoregressively, reusing cached keys and values to avoid recomputing the full context. 
The prefill stage tends to be more predictable computationally due to optimizations once the KV cache is filled, such as Grouped-Query Attention (\gqa)~\citep{ainslie2023gqa}. Hence, we design prompts with long input sequences and minimal generation length so that end-to-end latency is dominated by prefill. 

To make prefill dominate, we create a set of $N'$ diverse input sequence lengths $\mathcal{L} = \{\ell_j\}_{j=1}^{N'}$ (distinct from the prompt set $\promptset$ in Section~\ref{sec:spectral}).
For each input length $\ell_j$ ($j \in [N']$) and each training model $\model_i$, we collect an end-to-end timing signal $T_{i,j}$.

\paragraph{(2) Depth $L$ Estimation}

To estimate $L$, we model the prefill-dominated runtime $T_{i,j}$
for model $\model_i$ on input length $\ell_j$ as: 
\begin{equation}
	T_{i,j} \approx \beta\, L_{i}\, \ell_{j}^{2}\, d_{i} + \alpha,
	\label{eq:scaling_relation}
\end{equation}
where $L_i$ is the layer count, $d_i$ the true hidden dimension, and $\beta$,
$\alpha$ are empirical fitting parameters. The $L_i\, \ell_j^2\, d_i$ term
reflects the per-layer $O(\ell^2 d)$ attention cost~\citep{vaswani2017attention, dao2022flashattention}, which dominates the $O(\ell\, d^2)$ projection and FFN terms in the long-context
regime~\citep{kaplan2020scaling}. 
We show that this is a good model for timing measurements empirically in \Cref{sec:eval-timing}.
The coefficient $\beta$ reflects a combination of system-level factors (kernel efficiency, e.g., FlashAttention~\citep{dao2022flashattention}; hardware bandwidth; numerical precision; parallelism) and architectural choices (multi-head versus
grouped-query attention (\gqa)~\citep{ainslie2023gqa}; two-matrix GELU versus three-matrix SwiGLU FFN). Despite these sources of variation, we find that a single global $\beta$ provides a good approximation for models exceeding approximately 3B parameters that employ LLaMA-style transformer blocks, the predominant architectural choice in modern large language models.

Our basic estimator for $L$ for a target model $\model$ learns a global $\beta$ and $\alpha$ from our training set $\mathcal F$ via linear regression, using the variety of prompt lengths in the prompting set. 
Next, we plug $\alpha$, $\beta$, and the \hidden estimate $\dhat$ from the common-set token attack into \eqref{eq:scaling_relation} to solve for  $L$. 
We intentionally retain a single linear term in the fitting relation and ignore system-level variations stemming from diverse inference engine configurations and accelerator types, which would otherwise confound multi-term fits. 
We discuss the effect of model architecture (including \moe models) in \Cref{app:architecture-accuracy} and extend this algorithm to a setting where the training set is run on different hardware than the target model in \Cref{app:cross-hardware}.

\paragraph{(3) Total Parameter Count $\param$ Estimation}
Given our estimated hidden dimension $\dhat$, estimated layer count $\Lhat$, and
an upper bound on vocabulary size $\hat{V}$, we estimate the total parameter count assuming a
LLaMA-style decoder-only transformer with \gqa,
SwiGLU feed-forward layers, and untied input/output embeddings. 
Although this architecture is quite different from many LLMs, we find that this model is sufficiently close in practice to predict parameter counts to reasonable accuracy (at least within an order of magnitude).
\begin{equation}
    \Phat = 2\hat{V}\dhat
    + \Lhat\left(2 + 2\,\frac{\nkv}{\nheads}
    + 3\,\frac{\dffn}{\dhat}\right)\dhat^{\,2}
    + (2\Lhat+1)\dhat,
    \label{eq:total_param}
\end{equation}
where $\nkv/\nheads$ is the key-value head ratio and
$\dffn/\dhat$ is the feed-forward expansion ratio. The first term
accounts for the input embedding and output projection matrices (untied, as in
LLaMA). The second term captures the per-layer parameter budget: query and
output projections contribute $2\dhat^2$, key-value projections contribute
$2(\nkv/\nheads)\dhat^2$, and the three SwiGLU weight
matrices ($W_{\mathrm{gate}}$, $W_{\mathrm{up}}$, $W_{\mathrm{down}}$)
contribute $3(\dffn/\dhat)\dhat^2$. The third term accounts for
RMSNorm parameters: two per layer (pre-attention and pre-FFN) plus one final
norm, each with $\dhat$ learnable scale parameters.\footnote{In practice this term is
negligible, e.g., for $\dhat=4096$ and $\Lhat=32$ it constitutes
${\sim}0.3$M out of ${\sim}8$B total parameters.
Architecture-specific ratios are retrieved from a per-model registry when
available; otherwise, we default to values representative of LLaMA-3 8B
($\nkv/\nheads = 1/4$,
$\dffn/\dhat = 3.5$).} 

\section{Results}
\label{sec:eval}

\Cref{sec:eval-overall} establishes overall efficacy; \Cref{sec:eval-common-set} isolates the factors governing hidden-dimension estimation via common-set prompting; \Cref{sec:eval-timing} examines parameter-count and depth estimation.

\paragraph{Models}
We evaluate \algo across 32 LLMs (full list in \Cref{app:models}). Our suite spans two architectural classes, \emph{dense} and \emph{mixture-of-experts (\moe)} transformers, drawn from the Qwen, IBM Granite, Llama, OLMo, and HuggingFace model families. Together they cover 135M--30B parameters, hidden dimensions $d \in [576, 4096]$, and depths $L \in [16, 48]$.

\paragraph{Test bench}
Log-probability matrices and timing measurements are collected with HuggingFace \texttt{transformers} and spot-checked against \href{https://github.com/vllm-project/vllm}{vLLM} (v0.16.0). For the timing measurements used in the scaling-relation analysis, we serve models via vLLM and measure end-to-end inference latency through its API.
The \algo hyperparameters used throughout this section (number of prompts, samples per prompt, temperature, change-point detector windows, grid spacing, and precision) are fixed across all models and listed in \Cref{app:hyperparams}; further details on our timing measurements and experimental setup are in \Cref{app:timing-experimental-setup}.

\paragraph{Metrics}
We report cost in two units, token consumption and number of LLM calls,
\[
\text{Token Consumption} = (\text{System Prompt Length} + \text{Input Tokens} + \text{Output Tokens}) \cdot (\#\,\text{Samples})
\]
that conservatively assumes no system-prompt caching. Plots showing error vs.\ token budget use the average system-prompt length across the 39 models, $\bar{\ell}_{\text{sys}}\approx 31$ tokens. We also report \emph{relative error}: for parameter $\theta$ with estimate $\hat\theta$, we define $R(\hat\theta;\theta) = |\theta - \hat\theta|/\theta$.

\subsection{Overall Results}
\label{sec:eval-overall}

We measure the accuracy of \algo in estimating \hidden, \params, and \depth as a function of total budget, measured in tokens. 
Figure \ref{fig:err_d_L_N_plot} shows this tradeoff for all dense transformers in our dataset with parameter counts of at least 2B parameters; results are averaged across models with one run per model. 
For these curves, we use the estimate $\dhat$ of \hidden as an input to the estimate of \depth and \params, which causes error to propagate to $\Lhat$ and $\Phat$.
On average, \textbf{\algo estimates \hidden, \depth, and \params on average to within $\sim 40\%$, $25\%$, and $65\%$ relative error, respectively}, using a total budget of $\sim $ 800B tokens for model sizes ranging from 4B dense to 30B \moe.
While there is significant variability across models (shown as lighter curves in \Cref{fig:err_d_L_N_plot}), we find that most of the models generally improve in relative error as budget increases, as expected.

\begin{wrapfigure}[22]{l}{0.55\textwidth}
	\centering
    \vspace{-1em}
	\includegraphics[width=0.9\linewidth]{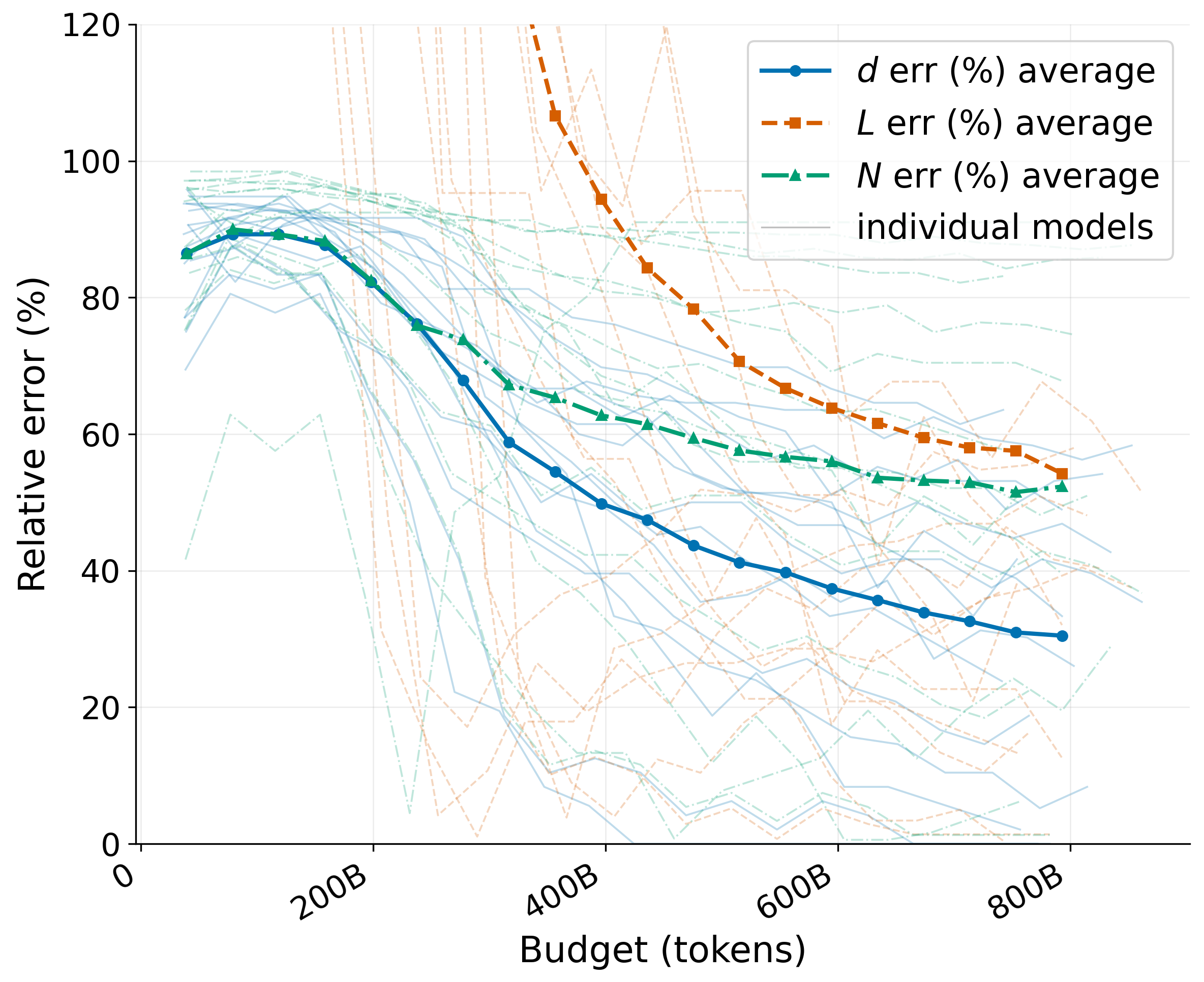}
	\caption{Relative error vs. token budget. Errors for $d$, $L$, and $\param$ are shown as a function of total token budget, with bold curves denoting averages across target models and faint curves denoting individual models. Larger budgets generally reduce error, with $L$ improving fastest and $\param$ remaining harder to estimate. 
    Results include $\geq$3B-parameter dense models and \moe models.
	}
	\label{fig:err_d_L_N_plot}
\end{wrapfigure}

\paragraph{\algo works better on large models.}
Figure \ref{fig:val_plot_n_layers} shows how relative error for estimating \depth, \params, and \hidden, respectively depends on model size for a collection of model architectures.
We see that while the common-set prompting attack works well at all model sizes, 
\depth and \params are more sensitive to model size. In particular, on larger models ($\geq 3$B parameters), \algo achieves low relative error (often below 30\%) for \depth and \params estimation. However, for models below 3B parameters, the estimation error is often over 100\% for both parameters. 
	This is likely because in smaller models, the dominant computational cost is no longer prefill, because the asymptotics have not yet kicked in. 
    For larger models, which are more common in production, \ttft is dominated by prefill costs, so our model is more accurate.

\subsection{Hidden-Dimension Recovery via the Common-Set Prompting Attack}
\label{sec:results_spectral}
\label{sec:eval-common-set}

We evaluate the spectral hidden-dimension estimator on a held-out validation set of 32 models.
Figure~\ref{fig:val_plot} summarizes the per-model rounded relative error, averaged across runs, as a function of total prompt-token budget; the right-most point of each curve corresponds to the maximum sample budget of $N{=}10^{6}$ per prompt (${\sim}800$B prompt tokens).
Each line averages models within a group; each $x$-axis tick is the mean tokens consumed over a batch of 50k samples. We separate models by size (\texttt{small} $\leq{}2$B 
parameters, \texttt{mid} ${\approx} 2$--$4$B, and \texttt{large} $\geq{}7$B) and architecture (dense vs. \moe).

\begin{wrapfigure}[15]{l}{0.6\textwidth}
	\vspace{-1em}
    \centering
	\includegraphics[width=0.9\linewidth]{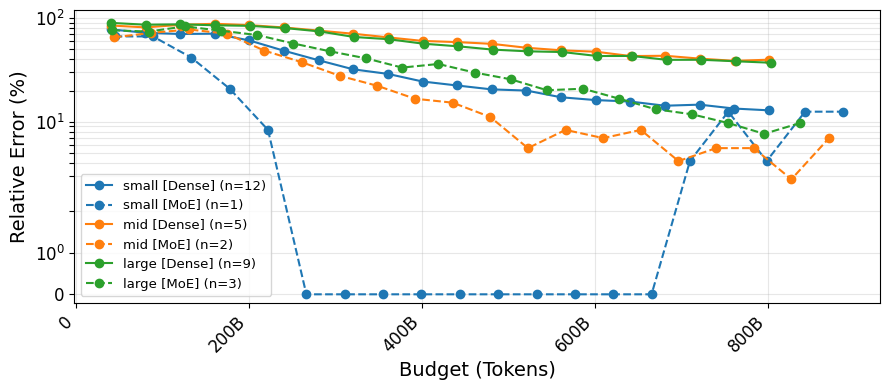}
	\caption{Mean relative error of the spectral hidden-dimension estimate $\dhat$ on the validation set vs.\ total prompt-token budget. Each curve averages over the $n$ validation models in the corresponding size and architecture group. 
    }
	\label{fig:val_plot}
\end{wrapfigure}
\Cref{fig:val_plot} highlights three points: (1) 
Average relative error drops monotonically with budget for all model classes. 
(2) Smaller models converge faster on average. This is expected, as small models tend to have smaller values of $d$, which in turn requires a smaller common set for recovery, and a lower token budget. 
(3) \moe models tend to converge noticeably faster than dense ones.
We hypothesize that this is because the expert routing structure in \moe models sends each token through a small subset of experts, so the effective next-token distribution depends on fewer parameters and this helps saturate the eigenspace quicker than dense models. 

Overall, the most challenging instances for the methodology incur errors between 49-64\% on dense architectures from the Llama, Granite, and Qwen families.
One possible explanation is that our token budget for these models was not high enough to saturate the eigenspectrum, in which case the estimator can only lower-bound the true rank and additional sampling is required to detect the true elbow. \Cref{fig:llama_8b_shape} illustrates this for Llama-3.1-8B ($d^{*}{=}4{,}096$): at our maximum budget, the saddle and peak landmarks that bracket the change-point detector's area of interest both fall well below $d^{*}$, so the spectrum has not saturated enough for the detector to reliably latch onto the true rank.

\paragraph{\algo incurs ${\approx}12-190\times$ the cost of \citet{carlini2024stealing}}
\label{sec:compare-baseline}
The baseline of \citet{carlini2024stealing} assumes access to top-$k$ logits and a logit bias function. Figure~\ref{fig:carlini_vs_us} shows what that extra access buys. For each of the 23 configurations that reach relative error $\le 30\%$, we used $D=20,000$ prompts and grew the per-prompt sample budget until \algo's $\dhat$ reached that threshold, and we plot the cost ratio of our attack at that point divided by the cost of the top-$1$ logit-bias attack in \citet{carlini2024stealing}.
Their cost is $d\,|\vocab|(\bar{\ell}_{\text{sys}}{+}2)$ tokens, counting one input plus one output token per logit-bias query on top of the average system prompt $\bar{\ell}_{\text{sys}}$; \algo's cost is the realized prompt-token total.

\begin{wrapfigure}{r}{0.5\textwidth}
	\vspace{-0.5em}
    \centering
	\includegraphics[width=0.9\linewidth]{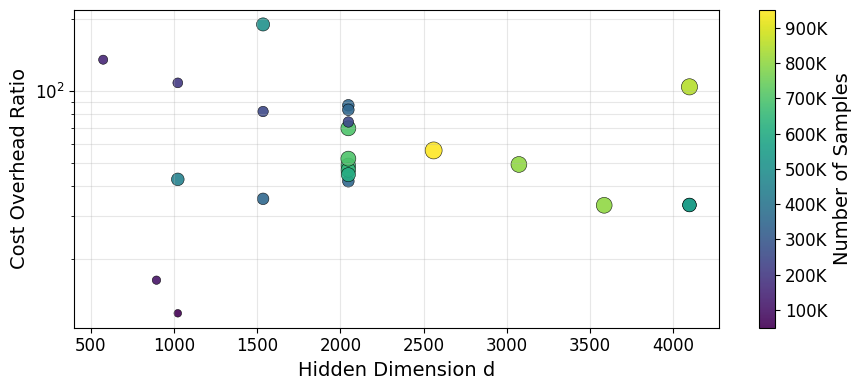}
	\caption{{Cost overhead ratio as a function of \hidden}. Cost overhead ratio = (Our cost)/(Cost of \citet{carlini2024stealing}). We report the cost to reach relative error $\le 30\%$. Marker color encodes the number of samples used to reach that error at each point.}
	\label{fig:carlini_vs_us}
\end{wrapfigure}
Overall, the cost ratio varies with \hidden $d$ but \algo spends about 1 to 2.3 orders of magnitude (median ${\sim}1.7$) more tokens than the logit-bias attack to reach the same accuracy on \hidden.
The overhead is substantial; these numbers partially reflect the inherent cost of working under the weaker single-decoded-logprob threat model, which still applies on production APIs that have removed top-$k$ and logit-bias access.
However, they may also reflect inefficiencies in \algo. For example, there may be more efficient prompting methods, or more efficient attacks altogether, that we have yet to identify.

The overhead shrinks to ${\sim}33\times$ on OLMo-2-7B and OLMo-3-7B at $d{=}4{,}096$. This matches the analysis: our per-prompt sample requirement scales sub-linearly in $d$, while Carlini's query count scales as $d\,|\vocab|$. 

\paragraph{Theory predicts the empirical sample complexity.}
\begin{figure}[htpb]
  \centering
  \includegraphics[width=0.9\linewidth]{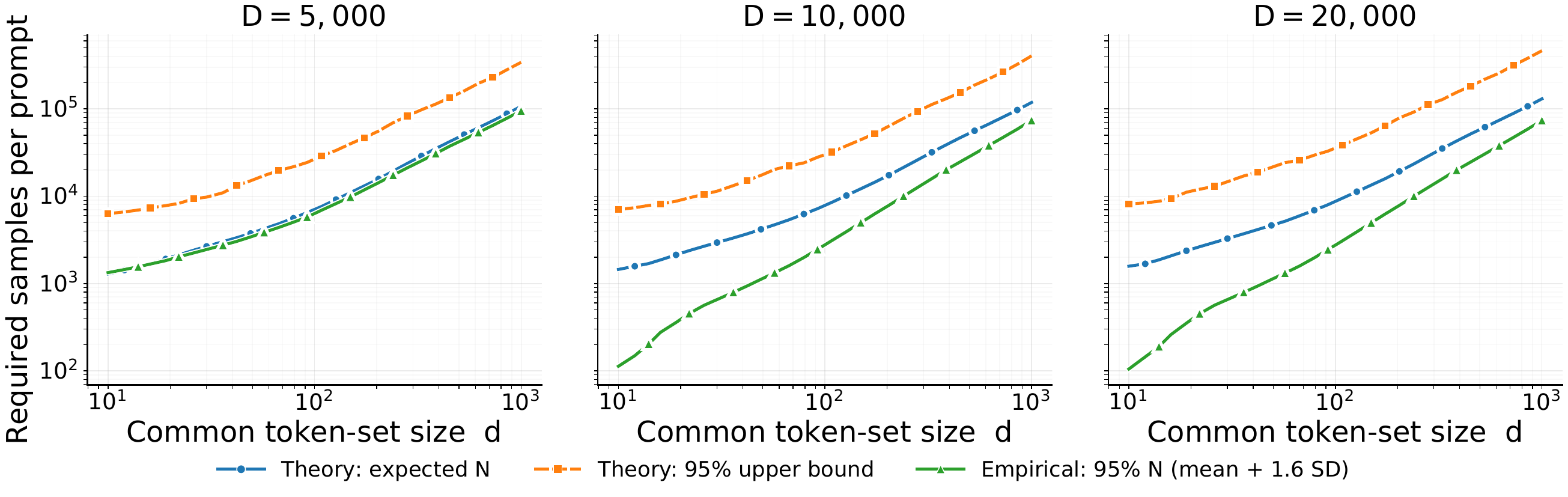}
    \caption{{Required samples per prompt $N$ to reach a common token set of
  size $d$, for $D\in\{5000,10000,20000\}$ prompts.} Token distributions are from
  the smallest HuggingFace SmolLM model ($135$M parameters; vocabulary
  $V=49{,}152$, temperature $T=1$). The two theory lines,
  {expected $N$} (blue) and the
  {$95\%$ upper bound} (orange,
  ~\Cref{cor:t-flat-sample-complexity}), analyze the strict requirement that a
  token appear in \emph{all} $D$ prompts. The
 {empirical $95\%$ $N$} (green) is Common Set Prompting,
  which greedily prunes prompts.}
  \label{fig:expected-N-vs-D}
\end{figure}
Figure~\ref{fig:expected-N-vs-D} reports the samples per prompt $N$ needed to obtain a
common token set of size $d$, as a function of $d$, for $D\in\{5000,10000,20000\}$
prompts on output distributions from the smallest SmolLM model. We draw $N$ tokens from
each prompt and call a token present in a prompt if it is drawn at least once. The
quantity plotted is always $N$ for a target of $d$ shared tokens. The curves differ in
which tokens count as shared. The theory analyzes the \emph{strict} common set, i.e., the
tokens present in all $D$ prompts. Common Set Prompting instead produces a
\emph{pruned} common set as it greedily discards the prompts that are hardest to satisfy
and keeps the tokens shared across the remaining subset, so a token need not appear in
every prompt to be counted. For the strict common set we report two theoretical
quantities: the expected $N$, obtained by inverting the expected intersection size
$\mu(N)=\sum_{i}\prod_{q=1}^{D}(1-(1-p^{(i)}_q)^N)$ at $\mu(N)=d$, and the $95\%$ upper
bound from Corollary~\ref{cor:t-flat-sample-complexity}, the smallest $N$ that
guarantees size at least $d$ with probability $1-\delta=0.95$. For the pruned common
set we report the empirical $95\%$ $N$ as the per-run mean plus $1.645$ standard
deviations over $40$ runs of the algorithm. Since pruning requires agreement across only
a subset rather than all $D$ prompts, the empirical cost lies below the theoretical upper bound and within a small constant factor of it. The theory, though it analyzes the
stricter strict-intersection problem, reliably predicts the sample complexity of Common Set Prompting.

\subsection{Depth and Total Parameter Count Estimation via the Timing Attack}
\label{sec:eval-timing}

We first validate the assumptions made in \Cref{sec:timing}, particularly the scaling relation in \eqref{eq:scaling_relation}. To this end, \Cref{fig:scaling_block_b} shows the end-to-end \ttft as a function of our desired scaling, which we model as being linear in $L\ell^2 d$.
Notice first that for most LLMs, \textbf{a linear timing model is reasonable, if not perfect}, with the same slope $\beta$ and offset $\alpha$ across models. However, the Qwen 3.5 family (shown in red, pink, and purple) requires a different linear fit for prediction.
This suggests that timing attacks could benefit from an initial phase to estimate the model family, possibly via model fingerprinting techniques. We leave this question to future work.

\begin{figure}[htpb]
	\centering
	\begin{minipage}[t]{0.48\textwidth}
		\centering
		\includegraphics[width=\linewidth]{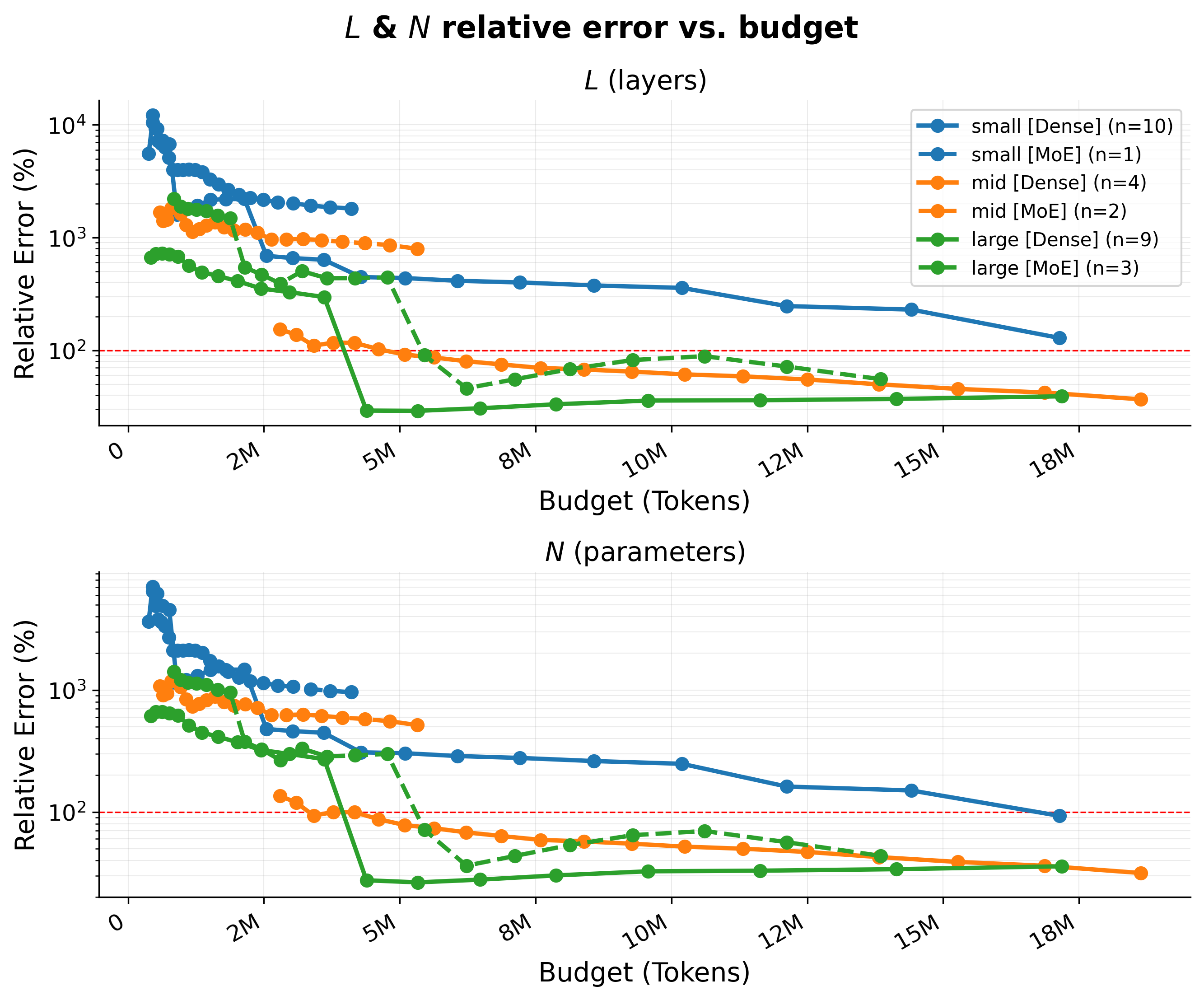}
		\caption{Mean relative error of the number of layers and total model parameter count estimates on the validation set vs.\ total prompt-token budget for timing method. Here we assume perfect knowledge of hidden dimension. Each curve averages over the $n$ validation models in the corresponding size and architecture group.}
		\label{fig:val_plot_timing}
	\end{minipage}
	\hfill
	\begin{minipage}[t]{0.48\textwidth}
		\centering
		\begin{overpic}[width=\linewidth]{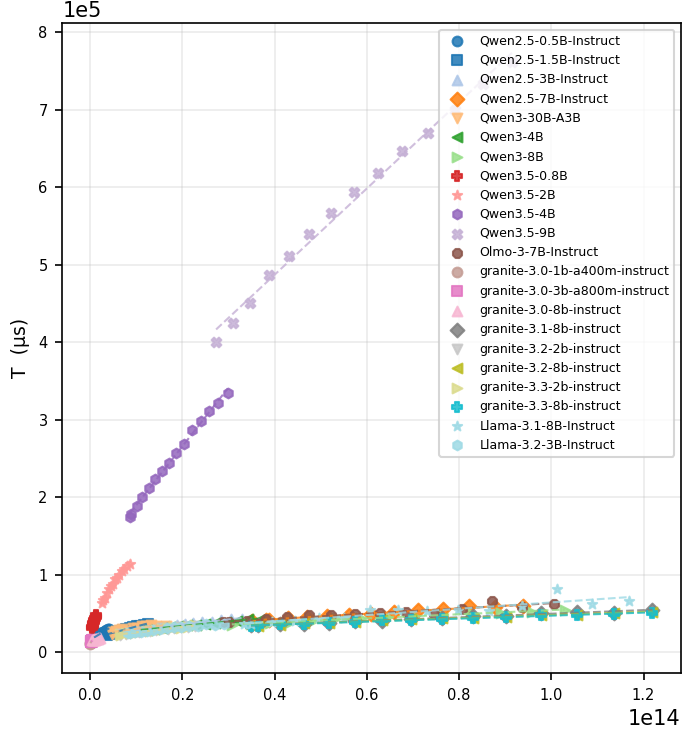}
			\put(50,-0){\makebox[0pt]{{\small $L\ell^2 d$}}}
		\end{overpic}
		\vspace{1em}
		\caption{Measured runtime versus $L\, \ell^{2}\, d$ relation for several models considered in this work. The Qwen3.5 family (points in red, pink, dark purple, and light purple) shows a different slope than others, which fall into LLaMA-style with SwiGLU type of blocks.}
		\label{fig:scaling_block_b}
	\end{minipage}
\end{figure}

Figure~\ref{fig:val_plot_timing} (and Figure~\ref{fig:val_plot_n_layers} in~\Cref{app:additional_timing}) presents the relative error for the layer count and total parameter count. Layer count estimates are obtained by combining the spectral hidden-dimension estimate
with the timing scaling relation of Eq.~\eqref{eq:scaling_relation}. For total parameter estimation, we use Eq.~\eqref{eq:total_param} and further assume that the vocabulary size is known exactly; this is a modeling simplification whose effect is small,
typically contributing less than $2\%$ to the overall estimate. While the average relative error for layer count and total parameters is approximately $50\%$ for models $\geq$ 3B parameter counts, several individual models yield substantially more accurate estimates: $\texttt{Qwen2.5-7B-Instruct}$ (${\sim}16\%$ for $L$, ${\sim}14\%$ for $N$), $\texttt{OLMo-3-7B-Instruct}$ (${\sim}20\%$, ${\sim}18\%$), and $\texttt{Qwen3-8B}$ (${\sim}35\%$, ${\sim}30\%$). The notable exceptions are $\texttt{granite-3.0-3B-A800M}$ and $\texttt{granite-3.0-8B}$, whose limited context windows prevent processing sufficiently long inputs to reach the prefill-dominated regime required by our scaling relation.

\paragraph{Performance does not improve significantly with budget.}
Figure \ref{fig:L_param_err_budget_per_model} shows that for larger models ($\geq$ 3B parameters), the accuracy of our method does not significantly improve with token budget. Recall that our method requires conducting linear regression on timing measurements for input prompts of different lengths; the slope is used to estimate \hidden and \params. 
It appears that we can accurately estimate the slope with relatively few samples at each sequence length. Hence, our estimation is not very sensitive to token budget for large models, at budgets that are orders of magnitude smaller than the requirements for the common-set prompting attack. This explains why we set the token budget to a flat 10M tokens in Figure \ref{fig:overview}.

\section{Conclusion, Limitations, and Future Work}
\label{sec:conclusion}
We present \algo, an attack that recovers certain hyperparameters of transformer-based architectures based only on black-box API access to a target model. Unlike prior attacks, \algo does not make use of top-$k$ logits or a logit-bias function.
We show that even with the very limited information available in modern APIs, we can reconstruct the \hidden, \depth, and \params of many LLMs, up to an accuracy threshold. 

This work has several limitations and directions for future research. First, \algo can incur nontrivial estimation error at high cost: reducing relative error requires token budgets around $10^{11}$ to $10^{12}$ per target (median ${\sim}4\times10^{11}$), which is $12$ to $190$ times more expensive than \citet{carlini2024stealing}, reflecting the cost of using the weaker single-decoded-logprob threat model. Second, our evaluation focuses on standard dense and \moe transformers from a few model families, and we have not yet studied how spectral signals behave in other architectures, such as state-space or hybrid models. Finally, the timing-based estimates of \depth and \params depend on a prefill-dominated regime and a calibrated serving stack. These estimates are less reliable for small models where prefill is not the main bottleneck, and they have not been tested on \moe deployments or in varied serving environments.

\ifpublic
\section*{Acknowledgments}
This effort was sponsored in whole or in part by the United States Government (USG). The U.S. Government is authorized to reproduce and distribute original submissions for publication for Governmental purposes notwithstanding any copyright notation thereon. The views and conclusions contained herein are those of the authors and should not be interpreted as necessarily representing the official policies or endorsements, either expressed or implied, of the United States Government (USG). The authors also thank Srinivasa Pranav for helpful discussions.
\fi
 
\bibliographystyle{plainnat}
\bibliography{main}

@misc{kaplan2020scaling,
      title={Scaling Laws for Neural Language Models}, 
      author={Jared Kaplan and Sam McCandlish and Tom Henighan and Tom B. Brown and Benjamin Chess and Rewon Child and Scott Gray and Alec Radford and Jeffrey Wu and Dario Amodei},
      year={2020},
      eprint={2001.08361},
      archivePrefix={arXiv},
      primaryClass={cs.LG},
      url={https://arxiv.org/abs/2001.08361}, 
}

@inproceedings{yang2024gated,
title={Gated Delta Networks: Improving Mamba2 with Delta Rule},
author={Songlin Yang and Jan Kautz and Ali Hatamizadeh},
booktitle={The Thirteenth International Conference on Learning Representations},
year={2025},
url={https://openreview.net/forum?id=r8H7xhYPwz}
}

@misc{shazeer2020glu,
      title={GLU Variants Improve Transformer}, 
      author={Noam Shazeer},
      year={2020},
      eprint={2002.05202},
      archivePrefix={arXiv},
      primaryClass={cs.LG},
      url={https://arxiv.org/abs/2002.05202}, 
}

@inproceedings{
ainslie2023gqa,
title={{GQA}: Training Generalized Multi-Query Transformer Models from Multi-Head Checkpoints},
author={Joshua Ainslie and James Lee-Thorp and Michiel de Jong and Yury Zemlyanskiy and Federico Lebron and Sumit Sanghai},
booktitle={The 2023 Conference on Empirical Methods in Natural Language Processing},
year={2023},
url={https://openreview.net/forum?id=hmOwOZWzYE}
}

@inproceedings{
dao2022flashattention,
title={FlashAttention: Fast and Memory-Efficient Exact Attention with {IO}-Awareness},
author={Tri Dao and Daniel Y Fu and Stefano Ermon and Atri Rudra and Christopher Re},
booktitle={Advances in Neural Information Processing Systems},
editor={Alice H. Oh and Alekh Agarwal and Danielle Belgrave and Kyunghyun Cho},
year={2022},
url={https://openreview.net/forum?id=H4DqfPSibmx}
}

@article{Duddu2018,
  title={Stealing neural networks via timing side channels},
  author={Duddu, Vasisht and Samanta, Debasis and Rao, D Vijay and Balas, Valentina E},
  journal={arXiv preprint arXiv:1812.11720},
  year={2018}
}

@inproceedings{hu2020deepsniffer,
  title={Deepsniffer: A {DNN} model extraction framework based on learning architectural hints},
  author={Hu, Xing and Liang, Ling and Li, Shuangchen and Deng, Lei and Zuo, Pengfei and Ji, Yu and Xie, Xinfeng and Ding, Yufei and Liu, Chang and Sherwood, Timothy and others},
  booktitle={Proceedings of the Twenty-Fifth International Conference on Architectural Support for Programming Languages and Operating Systems},
  pages={385--399},
  year={2020}
}

@article{zheng2024inputsnatch,
  title={Inputsnatch: Stealing input in {LLM} services via timing side-channel attacks},
  author={Zheng, Xinyao and Han, Husheng and Shi, Shangyi and Fang, Qiyan and Du, Zidong and Hu, Xing and Guo, Qi},
  journal={arXiv preprint arXiv:2411.18191},
  year={2024}
}

@inproceedings{xu2024instructional,
  title={Instructional fingerprinting of large language models},
  author={Xu, Jiashu and Wang, Fei and Ma, Mingyu and Koh, Pang Wei and Xiao, Chaowei and Chen, Muhao},
  booktitle={Proceedings of the 2024 Conference of the North American Chapter of the Association for Computational Linguistics: Human Language Technologies (Volume 1: Long Papers)},
  pages={3277--3306},
  year={2024}
}

@article{JoagProschan83,
 ISSN = {00905364, 21688966},
 URL = {http://www.jstor.org/stable/2240482},
 author = {Kumar Joag-Dev and Frank Proschan},
 journal = {The Annals of Statistics},
 number = {1},
 pages = {286--295},
 publisher = {Institute of Mathematical Statistics},
 title = {Negative Association of Random Variables with Applications},
 urldate = {2026-05-28},
 volume = {11},
 year = {1983}
}

@article{dubhashi1996balls,
  title={Balls and bins: A study in negative dependence},
  author={Dubhashi, Devdatt P and Ranjan, Desh},
  journal={BRICS Report Series},
  volume={3},
  number={25},
  year={1996}
}

@inproceedings{finlayson2024logits,
  title={Logits of {API}-Protected {LLMs} Leak Proprietary Information},
  author={Finlayson, Matthew and Ren, Xiang and Swayamdipta, Swabha},
  booktitle={Conference on Language Modeling (COLM)},
  year={2024},
  url={https://openreview.net/forum?id=oRcYFm8vyB}
}

@article{carlini2024stealing,
  title={Stealing part of a production language model},
  author={Carlini, Nicholas and Paleka, Daniel and Dvijotham, Krishnamurthy Dj and Steinke, Thomas and Hayase, Jonathan and Cooper, A Feder and Lee, Katherine and Jagielski, Matthew and Nasr, Milad and Conmy, Arthur and others},
  journal={arXiv preprint arXiv:2403.06634},
  year={2024}
}

@article{ubaru2016fast,
  title={Fast estimation of approximate matrix ranks using spectral densities},
  author={Ubaru, Shashanka and Saad, Yousef and Seghouane, Abd-Krim},
  journal={arXiv preprint arXiv:1608.05754},
  year={2016}
}

@article{zhu2006automatic,
  title={Automatic dimensionality selection from the scree plot via the use of profile likelihood},
  author={Zhu, Mu and Ghodsi, Ali},
  journal={Computational Statistics \& Data Analysis},
  volume={51},
  number={2},
  pages={918--930},
  year={2006},
  publisher={Elsevier}
}

@article{luxburg2007spectral,
  title={A tutorial on spectral clustering},
  author={von Luxburg, Ulrike},
  journal={Statistics and Computing},
  volume={17},
  number={4},
  pages={395--416},
  year={2007},
  publisher={Springer}
}

@article{shao2025reading,
  title={Reading Between the Lines: Towards Reliable Black-box {LLM} Fingerprinting via Zeroth-order Gradient Estimation},
  author={Shao, Shuo and Li, Yiming and Yao, Hongwei and Chen, Yifei and Yang, Yuchen and Qin, Zhan},
  journal={arXiv preprint arXiv:2510.06605},
  year={2025}
}

@article{shao2025sok,
  title={Sok: Large language model copyright auditing via fingerprinting},
  author={Shao, Shuo and Li, Yiming and He, Yu and Yao, Hongwei and Yang, Wenyuan and Tao, Dacheng and Qin, Zhan},
  journal={arXiv preprint arXiv:2508.19843},
  year={2025}
}

@inproceedings{pasquini2025llmmap,
  title={$\{$LLMmap$\}$: Fingerprinting for large language models},
  author={Pasquini, Dario and Kornaropoulos, Evgenios M and Ateniese, Giuseppe},
  booktitle={34th USENIX Security Symposium (USENIX Security 25)},
  pages={299--318},
  year={2025}
}

@article{zhang2024reef,
  title={Reef: Representation encoding fingerprints for large language models},
  author={Zhang, Jie and Liu, Dongrui and Qian, Chen and Zhang, Linfeng and Liu, Yong and Qiao, Yu and Shao, Jing},
  journal={arXiv preprint arXiv:2410.14273},
  year={2024}
}

@article{zeng2024huref,
  title={Huref: Human-readable fingerprint for large language models},
  author={Zeng, Boyi and Wang, Lizheng and Hu, Yuncong and Xu, Yi and Zhou, Chenghu and Wang, Xinbing and Yu, Yu and Lin, Zhouhan},
  journal={Advances in Neural Information Processing Systems},
  volume={37},
  pages={126332--126362},
  year={2024}
}

@inproceedings{gubri2024trap,
  title={Trap: Targeted random adversarial prompt honeypot for black-box identification},
  author={Gubri, Martin and Ulmer, Dennis and Lee, Hwaran and Yun, Sangdoo and Oh, Seong Joon},
  booktitle={Findings of the Association for Computational Linguistics: ACL 2024},
  pages={11496--11517},
  year={2024}
}

@article{verma2024operationalizing,
  title={Operationalizing a threat model for red-teaming large language models ({LLMs})},
  author={Verma, Apurv and Krishna, Satyapriya and Gehrmann, Sebastian and Seshadri, Madhavan and Pradhan, Anu and Ault, Tom and Barrett, Leslie and Rabinowitz, David and Doucette, John and Phan, NhatHai},
  journal={arXiv preprint arXiv:2407.14937},
  year={2024}
}

@article{pan2025survey,
  title={A survey of {LLM} inference systems},
  author={Pan, James and Li, Guoliang},
  journal={arXiv preprint arXiv:2506.21901},
  year={2025}
}

@article{li2024large,
  title={Large language model inference acceleration: A comprehensive hardware perspective},
  author={Li, Jinhao and Xu, Jiaming and Huang, Shan and Chen, Yonghua and Li, Wen and Liu, Jun and Lian, Yaoxiu and Pan, Jiayi and Ding, Li and Zhou, Hao and others},
  journal={arXiv preprint arXiv:2410.04466},
  year={2024}
}

@article{li2026ikp,
  title={Incompressible Knowledge Probes: Estimating Black-Box {LLM} Parameter Counts via Factual Capacity},
  author={Li, Bojie},
  journal={arXiv preprint arXiv:2604.24827},
  year={2026}
}

@article{alhazbi2025rhythm,
  title={{LLMs} have rhythm: Fingerprinting large language models using inter-token times and network traffic analysis},
  author={Alhazbi, Saeif and Hussain, Ahmed and Oligeri, Gabriele and Papadimitratos, Panos},
  journal={IEEE Open Journal of the Communications Society},
  year={2025},
  publisher={IEEE}
}

@inproceedings{hartenstein2025bridging,
  title={Bridging the Security Gap: An Empirical Analysis of LLM-API Integration Vulnerabilities and Mitigation Strategies},
  author={Hartenstein, Sandro},
  booktitle={Proceedings of the 2025 14th International Conference on Software and Computer Applications},
  pages={90--95},
  year={2025}
}

@inproceedings{liu2026exploring,
  title={Exploring and Exploiting Security Vulnerabilities in Self-Hosted LLM Services},
  author={Liu, Zhihuang and Hu, Ling and Tang, Yonghao and Zhou, Tongqing and Liu, Fang and Cai, Zhiping},
  booktitle={Proceedings of the ACM Web Conference 2026},
  pages={2535--2546},
  year={2026}
}

@inproceedings{amirizaniani2024auditllm,
  title={AuditLLM: A tool for auditing large language models using multiprobe approach},
  author={Amirizaniani, Maryam and Martin, Elias and Roosta, Tanya and Chadha, Aman and Shah, Chirag},
  booktitle={Proceedings of the 33rd ACM International Conference on Information and Knowledge Management},
  pages={5174--5179},
  year={2024}
}

@article{mokander2024auditing,
  title={Auditing large language models: a three-layered approach},
  author={M{\"o}kander, Jakob and Schuett, Jonas and Kirk, Hannah Rose and Floridi, Luciano},
  journal={AI and Ethics},
  volume={4},
  number={4},
  pages={1085--1115},
  year={2024},
  publisher={Springer}
}

@inproceedings{rastogi2023supporting,
  title={Supporting human-ai collaboration in auditing llms with llms},
  author={Rastogi, Charvi and Tulio Ribeiro, Marco and King, Nicholas and Nori, Harsha and Amershi, Saleema},
  booktitle={Proceedings of the 2023 AAAI/ACM Conference on AI, Ethics, and Society},
  pages={913--926},
  year={2023}
}

@inproceedings{tramer2016stealing,
  title={Stealing machine learning models via prediction $\{$APIs$\}$},
  author={Tram{\`e}r, Florian and Zhang, Fan and Juels, Ari and Reiter, Michael K and Ristenpart, Thomas},
  booktitle={25th USENIX security symposium (USENIX Security 16)},
  pages={601--618},
  year={2016}
}

@article{morris2023language,
  title={Language model inversion},
  author={Morris, John X and Zhao, Wenting and Chiu, Justin T and Shmatikov, Vitaly and Rush, Alexander M},
  journal={arXiv preprint arXiv:2311.13647},
  year={2023}
}

@inproceedings{zhao2025survey,
  title={A survey on model extraction attacks and defenses for large language models},
  author={Zhao, Kaixiang and Li, Lincan and Ding, Kaize and Gong, Neil Zhenqiang and Zhao, Yue and Dong, Yushun},
  booktitle={Proceedings of the 31st ACM SIGKDD Conference on Knowledge Discovery and Data Mining V. 2},
  pages={6227--6236},
  year={2025}
}

@article{chauvin2025log,
  title={Log Probability Tracking of LLM APIs},
  author={Chauvin, Timoth{\'e}e and Merrer, Erwan Le and Ta{\"\i}ani, Fran{\c{c}}ois and Tredan, Gilles},
  journal={arXiv preprint arXiv:2512.03816},
  year={2025}
}

@article{vaswani2017attention,
  title={Attention is all you need},
  author={Vaswani, Ashish and Shazeer, Noam and Parmar, Niki and Uszkoreit, Jakob and Jones, Llion and Gomez, Aidan N and Kaiser, {\L}ukasz and Polosukhin, Illia},
  journal={Advances in Neural Information Processing Systems},
  volume={30},
  year={2017}
}

%%%%%%%%%%%%%%%%%%%%%%%%%%%%%%%%%%%%%%%%%%%%%%%%%%%%%%%%%%%%
\newpage 
\appendix
\startcontents[appendix]
\printcontents[appendix]{}{1}{\section*{Appendix}}

\newpage 

\section{Additional Related Work}
\label{app:related}
In addition to the related work from \Cref{sec:related}, a separate and growing body of work on LLM \emph{fingerprinting} \citep{alhazbi2025rhythm,pasquini2025llmmap,zhang2024reef,zeng2024huref,gubri2024trap,shao2025reading, xu2024instructional} aims to identify which model from a known set is being served, rather than to recover architectural parameters of a previously-unseen model; closely related is the application of such fingerprints to copyright auditing \citep{shao2025sok}. These methods rely on per-target calibration data and are therefore not directly comparable to ours. The threat-model landscape for black-box LLM attacks more broadly is taxonomized in \citet{verma2024operationalizing}.

Methodologically, our use of inference timing as an auxiliary signal for architecture inference draws on a longer tradition outside the LLM setting. \citet{Duddu2018} combine end-to-end runtime with architecture search and distillation to recover substitute DNN models, and \citet{hu2020deepsniffer} use learning-based methods on similar signals. Surveys of modern LLM inference systems \citep{pan2025survey,li2024large} provide the hardware and software context that makes timing a useful signal in our setting. While recent timing-based attacks on LLM APIs have demonstrated input recovery from cache effects \citep{zheng2024inputsnatch} rather than architecture extraction, they reflect the same broader trend of treating runtime as exploitable auxiliary information.

\newpage 
\section{Algorithmic Details}

\subsection{Common-Set Attack Details}
\label{app:common-set}

\subsubsection{Finding High-Entropy Prompts}
\label{app:entropy}
Our common-set prompting method seeks prompts whose output distributions are as flat as possible. As discussed in \Cref{sec:prelim}, increasing the temperature
parameter $\tau$ pushes the distribution over tokens toward uniform. Empirically, we find that temperature alone is not enough. When the unscaled distribution is highly peaked, even large $\tau$ leaves substantial probability mass on the top tokens, as illustrated in \Cref{fig:temp_entropy}. For a prompt inducing high entropy over the tokens (top row), the unscaled top-100 tokens hold $68\%$ of the probability mass; temperature scaling spreads this further, dropping the top-100 mass to just $14\%$ and yielding a near-uniform distribution. For a low-entropy prompt (bottom row), the unscaled top-100 tokens hold $95\%$ of the mass, and even after scaling they retain $34\%$, which is still far from uniform. Prompt
selection is hence the dominant lever for vocabulary coverage. We find that out-of-distribution token sequences, i.e., inputs the model has likely not seen during training, reliably produce diffuse predictions, since the model lacks a strong prior over what token should come next. The next subsection describes how we identify such prompts.

\begin{figure}[htpb]
	\centering
	\includegraphics[width=0.8\textwidth]{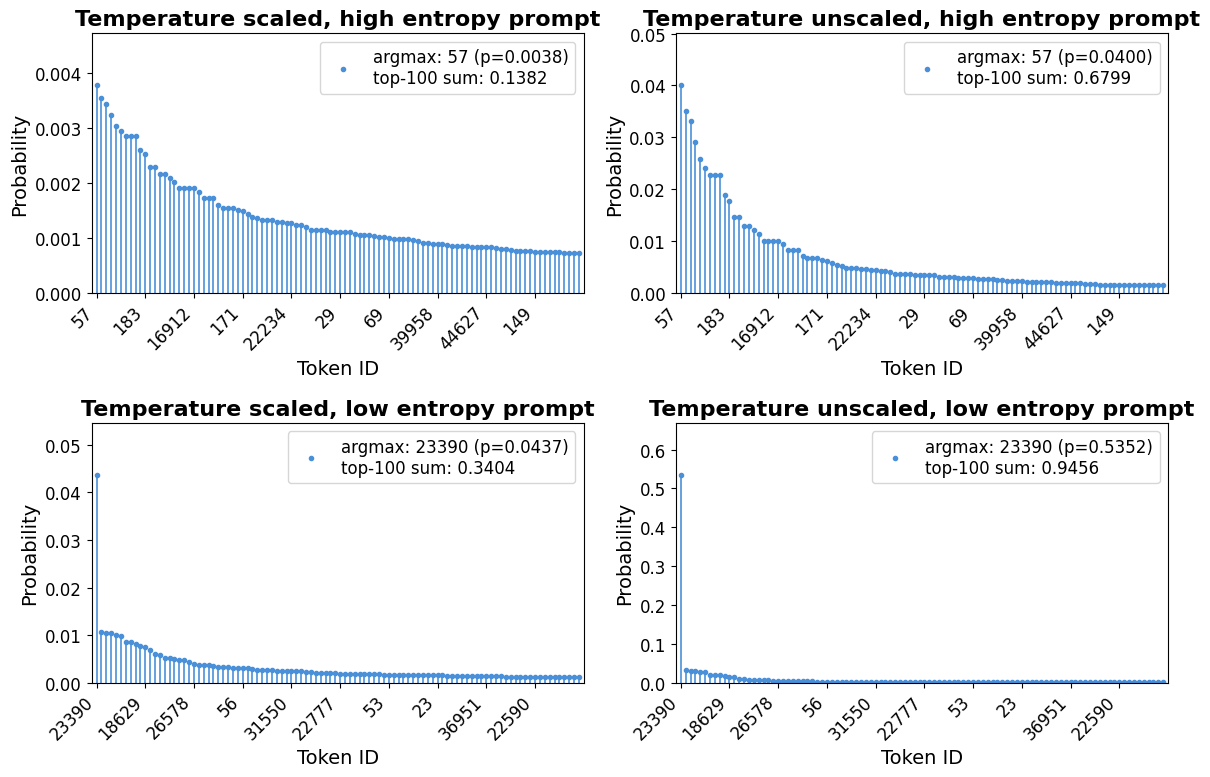}
	\caption{Next-token distributions for a high-entropy prompt (top)
		and a low-entropy prompt (bottom), shown without (left) and with
		(right) temperature scaling. High-entropy prompts yield
		substantially flatter distributions, enabling broader vocabulary
		coverage per sample. }
	\label{fig:temp_entropy}
\end{figure}

\paragraph{Automated High-Entropy Prompt Search}
\label{sec:prompt_search}
High-entropy prompts can be specific to each model, so it is desirable for us to have an automated way of finding enough of them for the common-set method given a new black-box model. In our experiments, our main method of finding these prompts is to sample random prompts from various Unicode character groups. Concretely, we consider character groups like Latin, CJK, emoji, Arabic, Hebrew, Braille, mathematical symbols, etc., and form candidate prompts by randomly sampling characters from these groups. 

Our search runs for a fixed number of iterations and combines exploration with greedy search over a best candidate prompt. The first few rounds draw fresh batches of random prompts across all character groups and record the prompt whose argmax token has the lowest probability seen so far. From the second round onwards, half of each batch consists of single-character variants of the current best prompt, so promising candidates are exploited while new groups can still surface; this exploration phase ends early once any prompt drives the argmax probability below a small threshold. The remaining iterations switch to pure refinement: every candidate is a one-character mutation of the current best, distributed evenly across character groups, and the best prompt is updated greedily after each batch.

All candidates are deduplicated against previously-evaluated prompts to ensure each string is evaluated only once.

After the final iteration, we sort every evaluated prompt by argmax probability and keep the $\inputprompts$ lowest-probability prompts as the prompt set $\promptset$ passed to the common-set construction. In practice, a larger $\inputprompts$ is preferable: more prompts yield broader vocabulary coverage in the common set.

\subsubsection{Greedy NaN Pruning}
\label{app:greedy}
We prune $\hat{\Mlpfull}$ to a large NaN-free submatrix using a greedy heuristic: 
At each step, find the surviving row or column with the highest fraction of NaNs and drop it, then update the per-row and per-column NaN counts. The loop terminates when no NaNs remain. The surviving rows index the $D' \le D$ prompts kept for the spectral method, the surviving columns form the common set $\commonset$, and the submatrix itself is the common-set log-probability matrix $\Mlp \in \mathbb{R}^{D' \times |\commonset|}$. 

\subsubsection{Inflection-Point Detection}
\label{app:changepoint}
Under the standard transformer output head, the logit for token $v_i$ given prompt $\prompt_q$ is $\zqi = \mathbf{h}_q^\top \mathbf{w}_i$ with $\mathbf{h}_q, \mathbf{w}_i \in \mathbb{R}^{d}$, so the logit matrix $\mathbf{Z}$ has rank at most $d$. The log-probability matrix $\Mlp$ (with entries $\yqi$) differs from $\mathbf{Z}$ only by a row-wise log-partition term $\log\sum_j \exp(z_q^{(j)})$, which is identical across all columns within a given row; temperature scaling preserves this structure. The rank-$d$ signal in $\mathbf{Z}$ is therefore recoverable from $\Mlp$ up to a row-wise additive constant.

Deployed models run inference in reduced-precision formats such as \texttt{bfloat16} (8-bit exponent, 7-bit mantissa, $\approx 3$ decimal digits of precision). The logits $\zqi$ are therefore quantized, and the returned log-probabilities carry rounding noise. As \citet{carlini2024stealing} observe, this means $\Mlp$ is essentially always full rank and we must recover its \emph{practical numerical rank}. We model this as $\Mlp = \Mlpstar + \boldsymbol{\eta}$, with a low-rank \emph{head} $\Mlpstar$ of rank at most $d{+}1$ holding the signal from $\mathbf{Z}$, and a full-rank floating-point \emph{noise tail} $\boldsymbol{\eta}$, and recovering $d$ amounts to separating the two. To do this, we take the Gram matrix $\Mgram = \Mlp\Mlp^\top$ when $D' \le |\commonset|$, and $\Mgram = \Mlp^\top\Mlp$ otherwise. Its eigendecomposition $\Mgram = \mathbf{U}\,\boldsymbol{\Lambda}\,\mathbf{U}^\top$ produces ordered eigenvalues $\lambda_1 \geq \lambda_2 \geq \cdots \geq \lambda_{\min(D',|\commonset|)} \geq 0$.

Formally, this is a numerical rank estimation problem \citep{ubaru2016fast}: we fix a threshold $\varepsilon \geq 0$ and report the $\varepsilon$-rank $|\{i : \eigi > \varepsilon\}|$, the index that divides the signal eigenvalues of $\Mlpstar$ from the noise tail of $\boldsymbol{\eta}$. The standard recipes for choosing $\varepsilon$ assume structure our spectrum lacks: a dominant eigengap or a low-density valley in the spectral density separating signal from noise \citep{luxburg2007spectral, ubaru2016fast}, or signal and noise eigenvalues that each concentrate around a level \citep{zhu2006automatic}.

Our goal is to identify the inflection point in this sequence, as illustrated in \Cref{fig:spectrum_compare}. The contrast with prior work motivates the design of our detector. The logit-bias attack of \citet{carlini2024stealing} reconstructs nearly all entries of the dense logit matrix, so the spectrum exhibits a sharp drop of several orders of magnitude precisely at index $d$ (\Cref{fig:spectrum_compare}a), and a simple elbow rule suffices. In our restricted-API setting we observe only sampled log-probabilities over the common set $\commonset$, and the resulting spectrum decays smoothly: the head merges into the noise tail through a gradual transition rather than a visible drop-off (\Cref{fig:spectrum_compare}b), so $d$ must be recovered from the curvature of the transition rather than from a discontinuity in magnitude.

\begin{figure}[htpb]
    \centering
    \begin{minipage}[t]{0.48\textwidth}
        \centering
        \includegraphics[width=\linewidth]{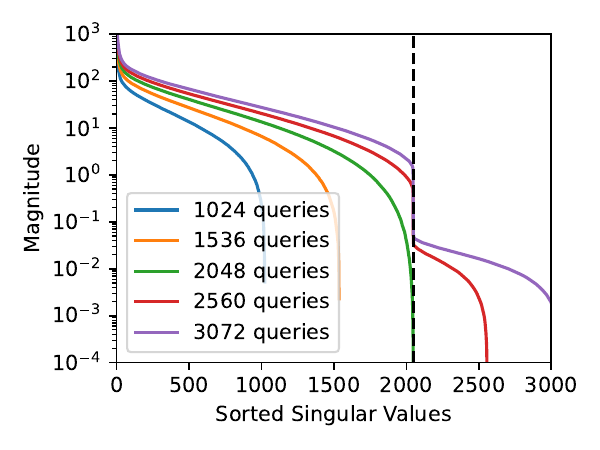}\\[2pt]
        \small (a) Logit-bias attack of \citet{carlini2024stealing}.
    \end{minipage}\hfill
    \begin{minipage}[t]{0.48\textwidth}
        \centering
        \includegraphics[width=\linewidth]{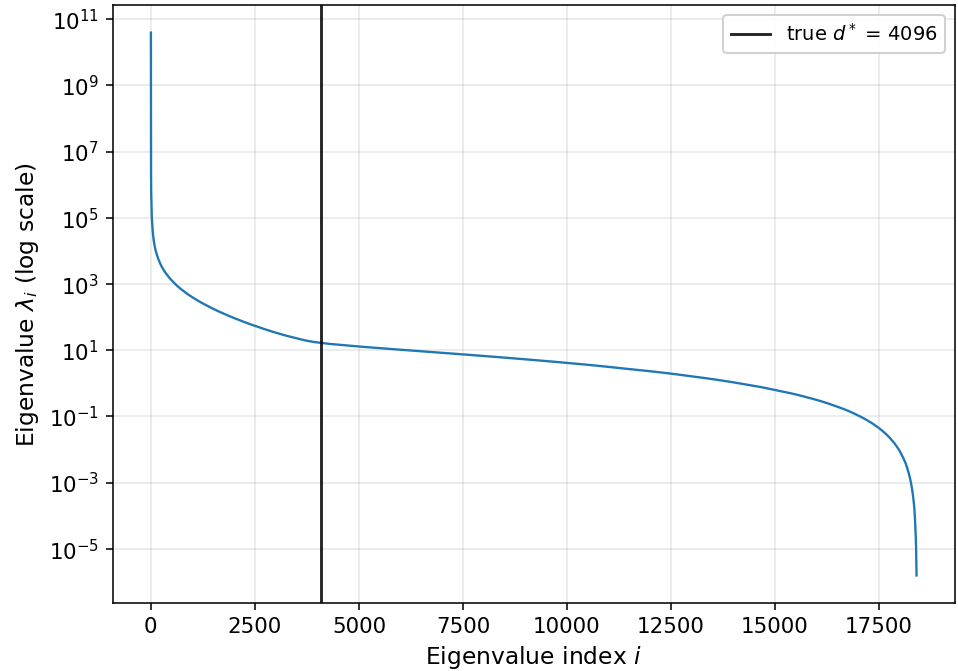}\\[2pt]
        \small (b) Our common-set spectrum (OLMo-3 7B).
    \end{minipage}\\[6pt]
    \begin{minipage}{\textwidth}
        \centering
        \includegraphics[width=\linewidth]{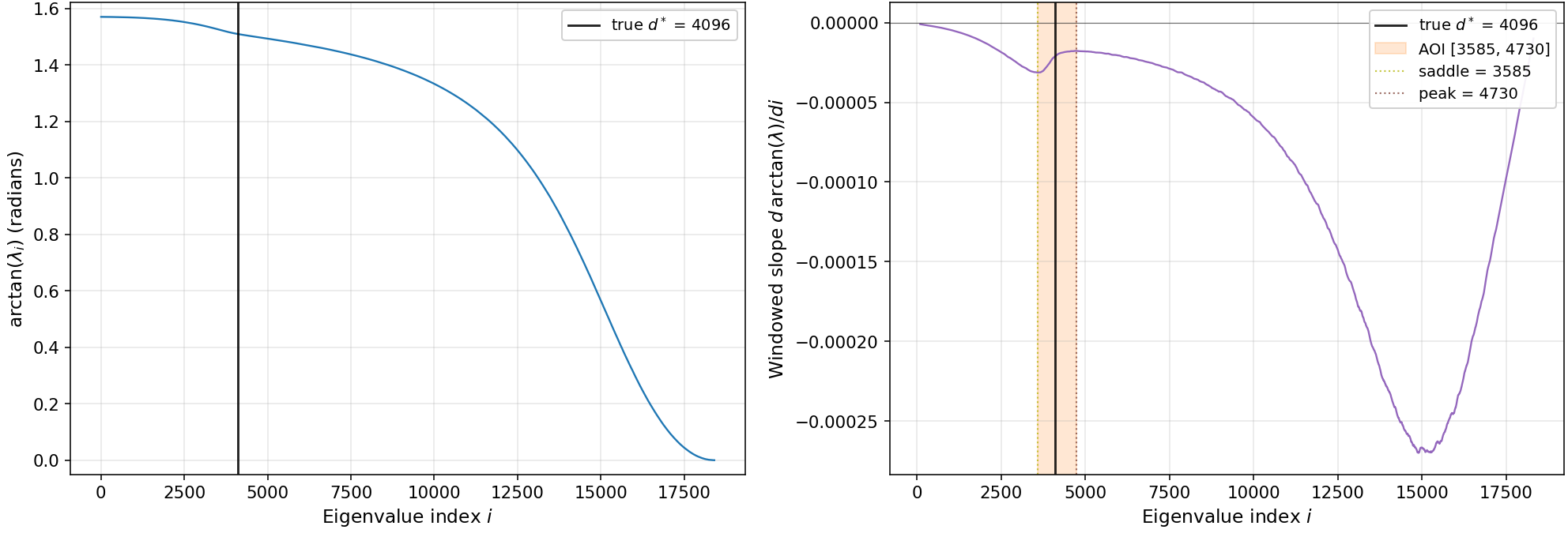}\\[2pt]
        \makebox[0.5\linewidth][c]{\small (c) Arctan-transformed spectrum.}%
        \makebox[0.5\linewidth][c]{\small (d) Windowed slope $\partial_i a_i$ with detected saddle, peak, and \aoi.}
    \end{minipage}
    \caption{Spectra used to recover the hidden dimension $d$. (a) Under the logit-bias attack of \citet{carlini2024stealing}, the singular-value spectrum drops sharply at index $d$ (dashed line) once the number of queries exceeds $d$, and a simple elbow rule recovers $d$. (b) In our restricted-API common-set setting (OLMo-3 7B, true $\dtrue = 4096$), the eigenvalue spectrum decays smoothly across several orders of magnitude with no clean elbow. (c) Applying the element-wise transform $a_i = \arctan(\eigi)$ compresses the head toward the plateau at $\pi/2$ and exposes the head-to-tail transition to slope-based detection. (d) The windowed slope $\partial_i a_i$ exhibits a saddle (local minimum) and a subsequent peak (first non-increase) that bracket the area of interest (\aoi) containing $d$; within this \aoi the algorithm returns the index of maximum negative second difference as $\dhat$, which closely tracks the true $\dtrue = 4096$.}
    \label{fig:spectrum_compare}
\end{figure}
We first apply the element-wise transform
\begin{align}
	a_i = \arctan(\eigi), \qquad i = 1, \dots, \min(D', |\commonset|),
\end{align}
which compresses the signal eigenvalues toward a plateau bounded above by $\frac{\pi}{2}$ and leaves the noise tail as a smooth decay to a final linear drop off at extremely small $a_i$ (\Cref{fig:spectrum_compare}c). This transform exposes the transition region between the head and its decay to slope-based detection when its head is sufficiently saturated.

Let $n = \min(D',|\commonset|)$. We compute the windowed least-squares slope $\partial_i a_i$ over $[i-w, i+w]$ with $w = \max(20, \lfloor n/50 \rfloor)$, then smooth it with a centered moving average $\slope = \mathrm{MA}_{w_s}(\partial_i a_i)$ of half-window $w_s = \max(20, \lfloor n/100 \rfloor)$. As shown in \Cref{fig:spectrum_compare}d, $\tilde{s}$ exhibits a shallow local minimum near the head-to-tail transition followed by a recovery toward zero before the deep negative excursion of the noise floor. We exploit this structure to localize $d$ between two landmarks of $\tilde{s}$, which together delimit the \emph{area of interest} (\aoi) shaded in \Cref{fig:spectrum_compare}d:
\begin{itemize}\setlength\itemsep{0pt}
	\item \textbf{Saddle:} $\isad$ is the first local minimum of $\tilde{s}$ with prominence at least $p\cdot(\max \tilde{s} - \min \tilde{s})$, where $a_i \le \arctan(\pi)$. This marks the lower edge of the \aoi (\Cref{fig:spectrum_compare}d, dotted line at $\isad=3585$ for OLMo-3 7B).
	\item \textbf{Peak:} $\ipeak$ is the first non-increase of $\tilde{s}$ after $\isad$, with $a_{\ipeak} \le \arctan(\pi/2)$ (near the noise floor). This marks the upper edge of the \aoi (\Cref{fig:spectrum_compare}d, dotted line at $\ipeak=4730$).
\end{itemize}
Within this bracket we recover the geometric inflection of the head-saturated $\arctan$ curve via the maximum negative second difference,
\begin{align}
	\dhat = \arg\max_{k \,\in\, [\isad,\, \ipeak]} \bigl(-\Delta^2 a_k\bigr).
\end{align}
For the OLMo-3 7B example in \Cref{fig:spectrum_compare}, the \aoi $[3585, 4730]$ tightly brackets the true hidden dimension $\dtrue = 4096$, and the second-difference maximum within the \aoi recovers $\dhat$ to within a handful of indices.

\newpage
\subsection{Hidden Dimension Attack}
\label{app:hidden-dim-attack}

\begin{algorithm}[htpb]
	\caption{\algo: common-set construction via greedy NaN-pruning}
	\label{alg:commonset-prune}
	\begin{algorithmic}[1]
		\Require Black-box model $f$ with sampled-token + log-prob API only; high-entropy prompt set $\promptset = \{\prompt_q\}_{q=1}^{D}$; samples per prompt $N$; temperature $\tau$.
		\Ensure Pruned log-probability matrix $\Mlp \in \mathbb{R}^{D' \times |\commonset|}$ over the common set $\commonset$.
		\For{$q = 1, \dots, D$}
		\State Draw $N$ samples from $f(\prompt_q)$ at temperature $\tau$, recording each $(v_{k}, y_q^{(k)})$.
		\State Set $\tokenset_q \gets \{\text{unique tokens observed under } \prompt_q\}$.
		\EndFor
		\State Assemble sparse $\hat{\Mlpfull} \in \mathbb{R}^{D \times |\vocab|}$ with $[\hat{\Mlpfull}]_{q,i} = \yqi$ for $v_i \in \tokenset_q$, $\mathrm{NaN}$ otherwise.
		\State Greedy NaN-prune $\hat{\Mlpfull}$: repeatedly drop the surviving row or column with the highest NaN fraction until no NaNs remain. Surviving columns form the common set $\commonset$; surviving rows index the prompts retained for spectral analysis (\Cref{sec:ctm}).
		\State Set $\Mlp \in \mathbb{R}^{D' \times |\commonset|}$ to the pruned submatrix.
		\State \Return pruned matrix $\Mlp$ and common set $\commonset$.
	\end{algorithmic}
\end{algorithm}

\begin{algorithm}[htpb]
	\caption{\algo: hidden-dimension estimation from the pruned spectrum}
	\label{alg:commonset-dhat}
	\begin{algorithmic}[1]
		\Require Pruned log-probability matrix $\Mlp \in \mathbb{R}^{D' \times |\commonset|}$ from \Cref{alg:commonset-prune}; slope half-window $w$; smoothing window $w_s$; prominence fraction $p$; grid $g$.
		\Ensure Estimated hidden dimension $\dhat$.
		\State Form the smaller-side Gram matrix $\Mgram \gets \Mlp\Mlp^{\top}$ if $D' \le |\commonset|$, else $\Mgram \gets \Mlp^{\top}\Mlp$.
		\State Eigendecompose: $\Mgram = \mathbf{U}\,\boldsymbol{\Lambda}\,\mathbf{U}^{\top}$, take $\lambda_1 \geq \cdots \geq \lambda_{\min(D',|\commonset|)}$.
		\State Transform: $a_i \gets \arctan(\eigi)$.
		%FIND AOI
		\State Compute the smoothed first derivative $\slope \gets \mathrm{MA}_{w_s}\!\big(\partial_i a_i\big)$, where $\partial_i a_i$ is the least-squares slope of $a$ on $[i-w,\, i+w]$ with $w = \max(20, \lfloor n/50 \rfloor)$ and $\mathrm{MA}_{w_s}$ is a centered moving average with $w_s = \max(20, \lfloor n/100 \rfloor)$.
		\State Find saddle: $\isad \gets$ first local min of $\tilde{s}$ with prominence $\ge p\cdot(\max\tilde{s}-\min\tilde{s})$, where $a_i = \arctan(\eigi) \le \arctan(\pi)$.
		\State Find Peak: $\ipeak \gets$ first non-increase of $\tilde{s}$ after $\isad$, with $a_{\ipeak} \le \arctan(\pi/2)$ (near the noise floor).
		\State $\dhat \gets \arg\max_{k\,\in\,[\isad,\,\ipeak]} \big(-\Delta^2 a_k\big)$, the geometric inflection of the head-saturated $\arctan$ curve via second differences.
		\State \Return $\dhat_{\mathrm{rnd}} = g\,\lceil \dhat/g \rceil$ \Comment{ceiling-snap to the grid}
	\end{algorithmic}
\end{algorithm}
\vspace{-1em}
\paragraph{\algo hyperparameters}
\label{app:hyperparams}
The hyperparameters in \Cref{alg:commonset-prune,alg:commonset-dhat} are fixed across all models in our evaluation, with values chosen on the development set (\Cref{app:models}):
\begin{itemize}[leftmargin=*, itemsep = -0.1pt]
    \item \textbf{Number of prompts $D$}: $D = 20{,}000$ high-entropy prompts produced by the search procedure of \Cref{app:entropy}.
    \item \textbf{Samples per prompt $N$}: swept up to $N = 10^{6}$ in the budget curves (\Cref{fig:err_d_L_N_plot}); per-model results in \Cref{tab:model-architectures} use the maximum budget.
    \item \textbf{Sampling temperature $\tau$}: $\tau = 2.0$, applied at the API to flatten the next-token distribution.
    \item \textbf{Prompt length}: $5$ Unicode characters per prompt, sampled from the character groups described in \Cref{app:entropy} (not $5$ tokens; tokenized length varies by model).
    \item \textbf{Slope half-window $w$}: $w = \max(20,\, \lfloor n/50 \rfloor)$, where $n = \min(D',\, |\commonset|)$ is the spectrum length after pruning.
    \item \textbf{Smoothing window $w_s$}: $w_s = \max(20,\, \lfloor n/100 \rfloor)$ for the centered moving average applied to the slope.
    \item \textbf{Prominence fraction $p$}: $p = 0.54$, used to gate saddle detection on $\tilde{s}$.
    \item \textbf{Grid spacing $g$}: $g = 128$, the ceiling-snap applied to $\dhat$ (chosen to match the alignment of hidden dimensions in mainstream transformer configurations).
    \item \textbf{Model precision}: \texttt{bfloat16} for all log-probability collection, matching the precision used for serving.
\end{itemize}

\newpage 
\subsection{Timing Attack Details}
\label{app:timing}

\subsubsection{Overall Algorithm}
\label{app:overall-timing}
The end-to-end algorithm for our timing attack can be found in \Cref{fig:algo_timing}.

\begin{figure}[htpb]
	\begin{tcolorbox}[
		colback=white, colframe=white, boxrule=0.6pt, arc=2pt,
		left=2pt, right=2pt, top=0pt, bottom=2pt,
		boxsep=0pt,
		]
		
		\noindent
		\colorbox{ourshl}{\makebox[\linewidth][c]{%
				\rule[-0.05em]{0pt}{0.1em}\color{white}\bfseries\scriptsize
				\algo Phase 2: Timing-based architecture inference}}
		
		\par\vspace{1pt}
		\noindent
		\begin{algorithmic}[1]
			\scriptsize
			\Require \\
			Reference models $\refset = \{\model_i\}_{i=1}^{K}$ with known $d_i$, $L_i$\\
			Target model $\model_g$ with unknown $L_g$;
			\hlO{spectral estimate $\dhat_g$} (\Cref{sec:spectral})\\
			Sequence lengths $\mathcal{L} = \{\ell_j\}_{j=1}^{N'}$\\
			Platform flag: \textsc{Same} or \textsc{Cross}
			\Ensure Inferred layer count $\Lhat_g$ and total parameters $\Phat_g$
		\end{algorithmic}
		
		\stagebanner{Stage 1: Collect prefill-dominated runtimes}
		
		\noindent
		\begin{algorithmic}[1]
			\scriptsize
			\setcounter{ALG@line}{0}
			\For{each model $\model_i \in \refset \cup \{\model_g\}$}
			\For{$j = 1, \dots, N'$}
			\State \hlO{Query $\model_i$ with input length $\ell_j$}
			(long context, minimal generation)
			\State \hlO{Record runtime $\Tij$}
			\EndFor
			\EndFor
		\end{algorithmic}

		\stagebanner{Stage 2: Infer $\Lhat_g$}
		
		\noindent
		\begin{algorithmic}[1]
			\scriptsize
			\setcounter{ALG@line}{6}
			\State \hlO{Fit L-known} over $\refsetref_{g}$:
			$\Tref \approx \betaref_{g}\, L_i\,
			\ell_j^{2}\, d_i + \alpharef$
			\hfill $\triangleright$ Eq.~\eqref{eq:scaling_relation}
			\State \hlO{Fit L-unknown} over $\refsetref_{g}$:
			$\Tref \approx \gamma^{\mathrm{ref},d}_{g}\,
			\ell_j^{2}\, d_i + \zetaref$
			\hfill $\triangleright$ Eq.~\eqref{eq:reference_scaling}
			\State \hlO{Fit L-unknown} over $\refsettgt_{g}$:
			$\Ttgt \approx \gamma^{\mathrm{tgt},\dhat}_{g}\,
			\ell_j^{2}\, \dhat_i + \zeta^{\prime\mathrm{tgt}}$
			\hfill $\triangleright$ Eq.~\eqref{eq:per_model_tgt}
			\State \hlO{Compute transfer factor}:
			$\transferk \gets \gamma^{\mathrm{tgt},\dhat}_{g} \,/\,
			\gamma^{\mathrm{ref},d}_{g}$
			\hfill $\triangleright$ Eq.~\eqref{eq:transfer_factor}
			\State \hlO{Fit per-model} for $f_g$ with $\dhat_g$:
			$T_{g,j}^{\mathrm{tgt}} \approx \gammatgt_g\,
			\ell_j^{2}\, \dhat_g + \zetatgt_g$
			\hfill $\triangleright$ Eq.~\eqref{eq:cross_platform}
			\State $\Lhat_g \gets \gammatgt_g \,/\,
			(\transferk \cdot \betaref_{g})$
			\hfill $\triangleright$ Eq.~\eqref{eq:L_inference_global}
			
		\end{algorithmic}

		\stagebanner{Stage 3: Estimate parameters $\Phat_g$}
		
		\noindent
		\begin{algorithmic}[1]
			\scriptsize
			\setcounter{ALG@line}{14}
			\State \hlO{Assume LLaMA-style} decoder-only transformer with \gqa and SwiGLU
			\State $\Phat_g \gets 2\hat{V}\dhat_g + \Lhat_g
			\left(2 + 2\,\frac{\nkv}{\nheads}
			+ 3\,\frac{\dffn}{\dhat_g}\right)\dhat_g^{\,2}$, where ${\nkv \over \nheads}$ = 1/4, ${\dffn \over \dhat_g}$ = 3.5
			\hfill $\triangleright$ Eq.~\eqref{eq:total_param}
			\State \Return $\dhat_g,\; \Lhat_g,\; \Phat_g$
		\end{algorithmic}
		
	\end{tcolorbox}
	\caption{Timing-based architecture inference pipeline. The procedure fits
		prefill-dominated scaling relations to infer layer count $\Lhat_g$ and
		total parameters $\Phat_g$ from end-to-end runtime measurements, with
		optional cross-platform calibration.}
	\label{fig:algo_timing}
\end{figure}

\subsubsection{Effect of model architecture on accuracy}
\label{app:architecture-accuracy}
In practice, models largely obey the same scaling relation within each of three architectural categories: \textbf{(A)}~GPT-style with GELU and a two-matrix FFN, \textbf{(B)}~LLaMA-style with SwiGLU and a three-matrix gated FFN, and \textbf{(C)}~Variants, such as Qwen3.5, for which it contains a hybrid combining LLaMA-style
transformer blocks with DeltaNet linear-attention layers~\citep{yang2024gated}. Most modern LLMs
fall into category~(B), which dominates our evaluation. Qwen3.5 requires separate treatment (e.g., see Fig~\ref{fig:scaling_block_b}) because
its DeltaNet layers replace quadratic $O(\ell^2)$ attention with a linear-complexity recurrence.
Consequently, only a subset of its layers exhibit the $\ell^2$-dependent cost assumed by
Eq.~\eqref{eq:scaling_relation}, reducing the effective layer count below the total architectural
depth and necessitating category-specific coefficients.

As shown in Fig~\ref{fig:scaling_law}, category~(A) exhibits noticeably poorer adherence to the scaling relation than category~(B). We attribute this to the structural difference in their FFN blocks: GPT-style models use a two-matrix FFN with GELU, whereas SwiGLU-based models use a three-matrix gated FFN ~\citep{shazeer2020glu}, resulting in 50\% more general matrix--matrix multiplications (GEMMs) per layer. This higher GEMM fraction means compute-bound operations dominate a larger share of prefill wall-clock time in category~(B), making the quadratic scaling ansatz in Eq.~\eqref{eq:scaling_relation} a tighter fit. In category~(A), the smaller GEMM share allows non-scaling overhead to perturb the relation more substantially.

\begin{figure}[t]
 	\centering
 	\includegraphics[width=13cm]{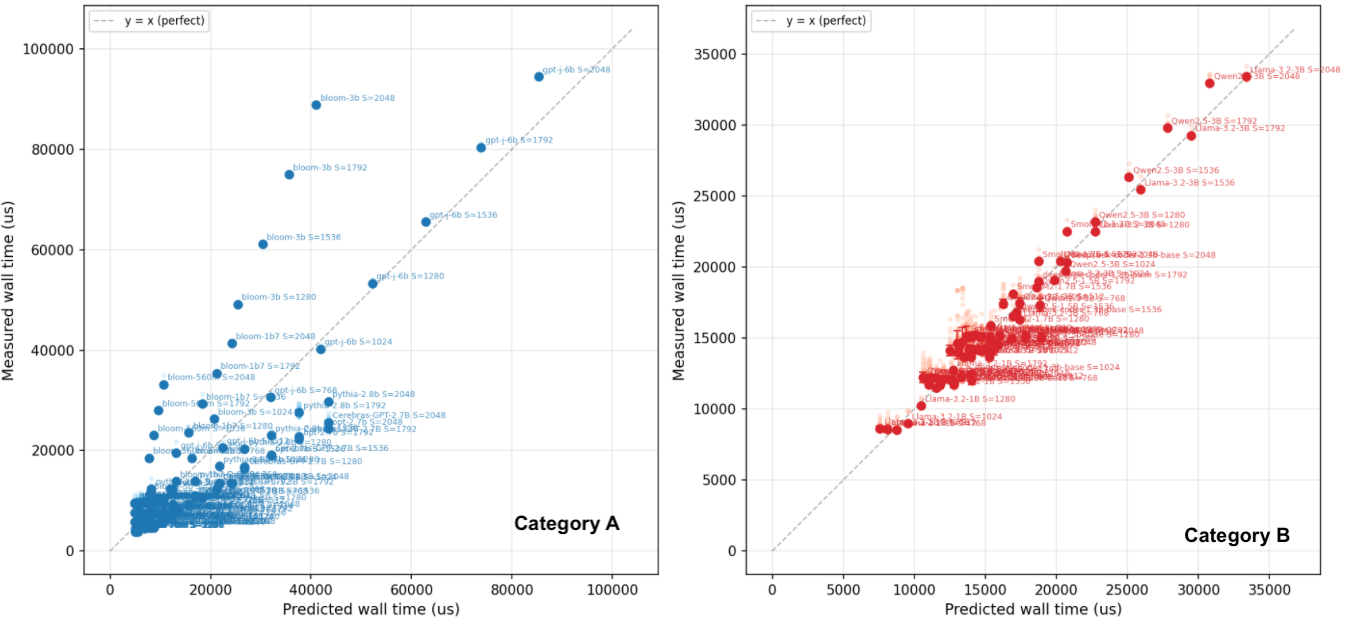}
 	\caption{\footnotesize Measured runtime versus predicted runtime with scaling relation as function of number of layers, hidden dimension, and input sequence length. 
 	}
 	\label{fig:scaling_law}
\end{figure}

\subsubsection{Inferring  Depth $L$ under Hardware Mismatch}
\label{app:cross-hardware}
In practice, the target model is rarely benchmarked on the same platform and configuration used to establish the reference scaling relation. We therefore introduce a cross-platform calibration procedure that transfers the reference relation to a new target platform via a single global scale factor. The same framework also applies to same-platform settings, accounting for residual system-level variations such as inference configuration and runtime conditions.

Concretely, we establish fitted global scaling relations
Eq.~\eqref{eq:scaling_relation} on a
reference platform using a set of reference models
$\mathcal{F}^{\mathrm{ref}} = \{\model_i^{\mathrm{ref}}\}_{i=1}^{K}$ with known $d_r$ and $L_r$:
\begin{equation}
	T_{r,j} \approx \beta^{\mathrm{ref}}\, L_{r}\, \ell_{j}^{2}\, d_{r}
	+ \alpha^{\mathrm{ref}}\approx \gamma^{\mathrm{ref}}\, \ell_{j}^{2}\, d_{r}
	+ \zeta^{\mathrm{ref}},
	\label{eq:reference_scaling}
\end{equation}

\begin{figure}[t]
 	\centering
 	\includegraphics[width=12cm]{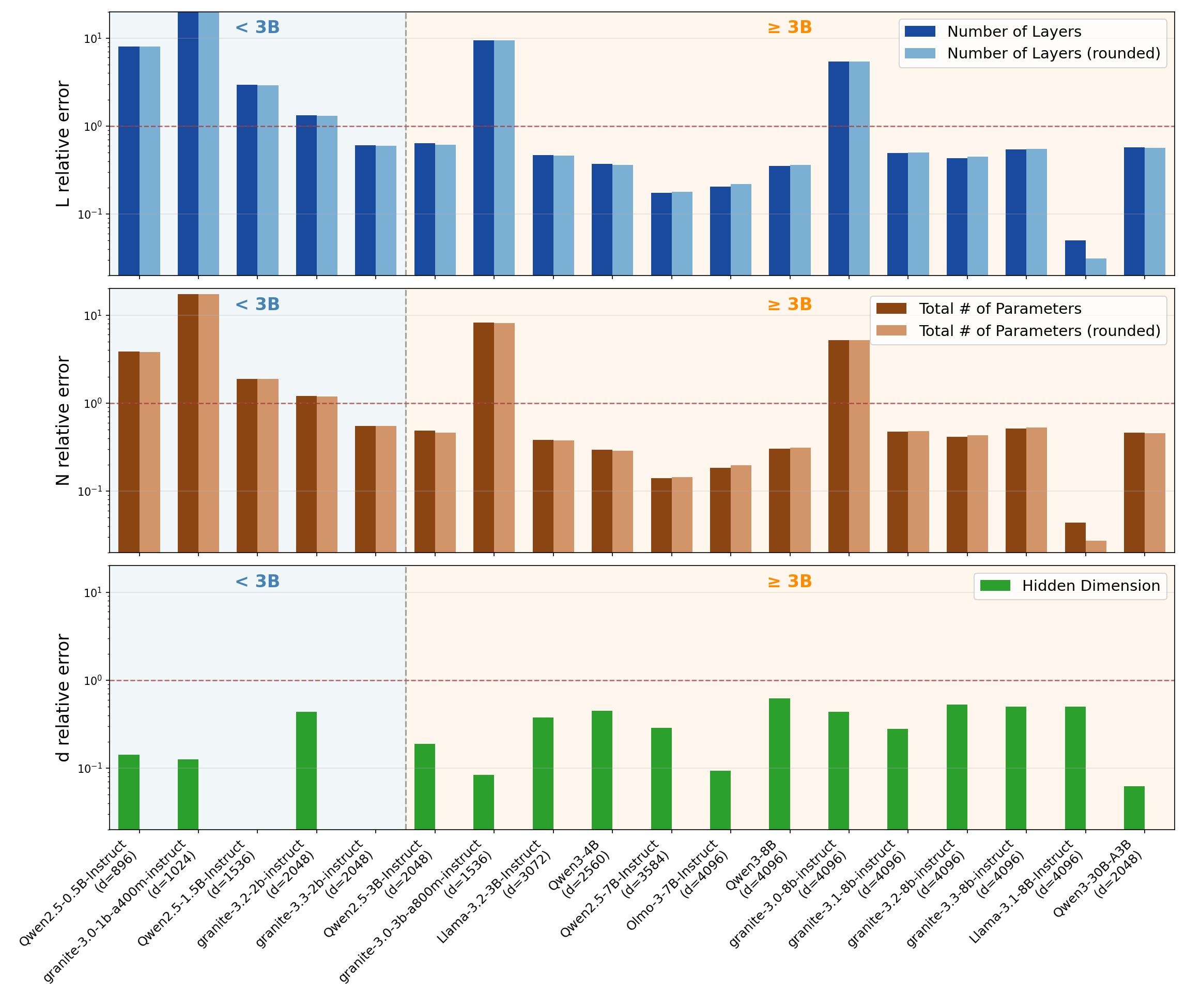}
 	\caption{\footnotesize Relative error for the estimate of the number of layers (top panel), total number of parameters (middle panel), and hidden dimension (bottom panel) for various models. Here we assume the reference and target platform are the same (both with NVIDIA H200 GPU). Dark bars show the raw estimate; light bars show the estimate after rounding to the nearest integer. Some of the largest errors are driven by small models with $<$ 3B parameters. For the layer-count and total-parameter estimates, the large errors for $\texttt{granite-3.0-3b-a800m}$ and $\texttt{granite-3.0-8b}$ arise from their small context window of 4096 tokens, which is insufficient to effectively probe the prefill stage.
 	}
 	\label{fig:val_plot_n_layers}
\end{figure}

\begin{figure}[t]
	\centering
	\includegraphics[width=12cm]{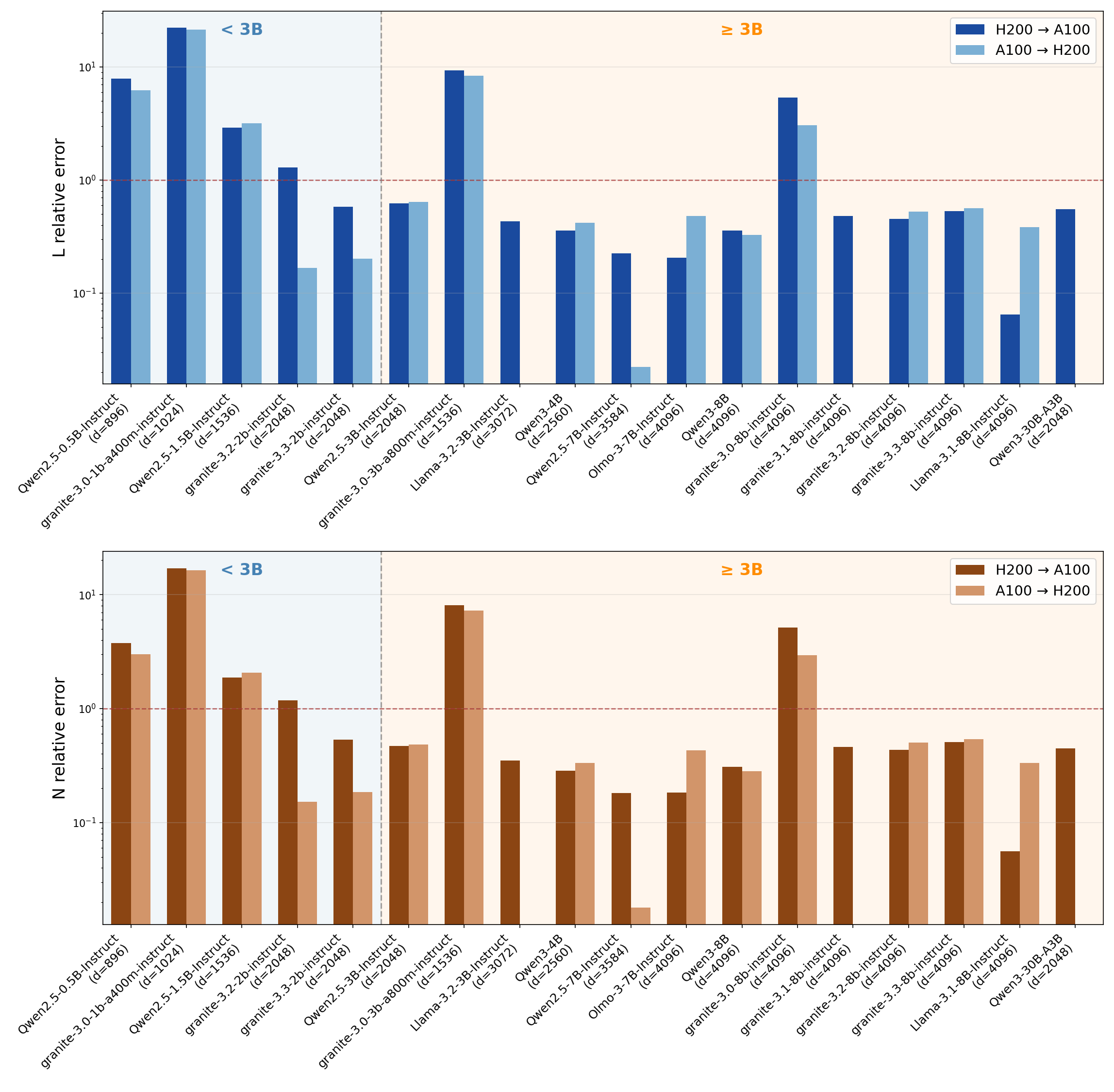}
	\caption{\small Cross-platform relative error for the estimated layer count (top) and total parameter count (bottom) across models, where the platform transfer is between NVIDIA H200 and A100 GPUs. Dark bars denote H200 as the reference platform and A100 as the target; light bars denote the reverse configuration. }
	\label{fig:cross_gpu}
\end{figure}

On the target platform, we assume a set of models available $\mathcal{F}^{\mathrm{tgt}} = \{\model^{\mathrm{tgt}}_i\}_{i=1}^{K'}$. 

Let $f^{\mathrm{tgt}}_g$ denote the target model, 
we fit a global relation using
spectral estimates $\dhat$ (Section~\ref{sec:spectral}):
\begin{equation}
	T_{i,j}^{\mathrm{tgt}} \approx \gamma^{\mathrm{tgt}}\,
	\ell_{j}^{2}\, \dhat_{i} + \zeta^{\mathrm{tgt}}.
	\label{eq:per_model_tgt}
\end{equation}
Assuming $\dhat \approx d$, we compute the platform transfer factor
\begin{equation}
\transferk = \gamma^{\mathrm{tgt}} / \gamma^{\mathrm{ref}} \; , \label{eq:transfer_factor}
\end{equation} which captures the
aggregate throughput ratio between the two platforms.  Applying $\transferk$ to the
layer-aware reference relation yields the cross-platform approximation
\begin{equation}
	T_{i,j}^{\mathrm{tgt}} \approx \transferk\, \beta^{\mathrm{ref}}\, L_i\,
	\ell_{j}^{2}\, d_{i} + \alpha^{\mathrm{ref}},
	\label{eq:cross_platform}
\end{equation}
from which the inferred layer count is recovered as:
\begin{equation}
	\Lhat_g = \frac{\gamma_{g}^{\mathrm{tgt}}}
	{\transferk \cdot \beta^{\mathrm{ref}}} = {\gamma_{g}^{\mathrm{tgt}} \cdot \gamma^{\mathrm{ref}} \over \gamma^{\mathrm{tgt}} \cdot \beta^{\mathrm{ref}}},
	\label{eq:L_inference_global}
\end{equation}
where $\gamma_{g}^{\mathrm{tgt}}$ is the held-out target model fitting parameter, obtained by solving \eqref{eq:per_model_tgt} for $\gamma^{\mathrm{tgt}}$.

This formulation assumes that the relative scaling behavior across layers is
preserved across platforms, i.e., that platform differences manifest primarily as a multiplicative constant rather than altering the functional form of the relation. 

Figure~\ref{fig:cross_gpu} presents cross-platform estimation results between NVIDIA H200 and A100 GPUs, showing the inferred layer count (top panel) and total parameter count (bottom panel). Compared with the
same-platform setting of Figure~\ref{fig:val_plot_n_layers} (reference and target models both on NVIDIA H200), the cross-platform results exhibit a similar range of relative errors and comparable trends across models, with
an average relative error of approximately $50\%$ and the same model-size dependency observed in the same-platform case (larger models tend to yield lower error). These results demonstrate that our cross-platform calibration procedure (Section~\ref{app:cross-hardware}) effectively compensates for
hardware-specific throughput differences without degrading inference accuracy.

\clearpage 
\section{Model List}
\label{app:models}

Table~\ref{tab:model-architectures} summarizes the architectural details of all
language models evaluated in this work. We report the configurations published on the
Hugging Face Hub. All values are extracted from each model's \texttt{config.json}.

The 39 models span eight families: Alibaba's Qwen 2.5, Qwen 3, and Qwen 3.5 series; IBM's Granite 3.0 through 3.3 and the Granite 4.0 dense models; Meta's Llama 3.1 and 3.2 instruction models; AI2's OLMo 2, OLMoE, and OLMo 3; and Hugging Face's SmolLM2 and SmolLM3. Together they cover more than two orders of magnitude of total
parameters (135M to 30B) and span various architectures in current use:
standard dense transformers (Qwen 2.5/3, Llama, OLMo 2, SmolLM, Granite 3.x and 4.0 dense),
sparse mixture-of-experts (Qwen 3 30B-A3B, Granite 3.x \moe, OLMoE), dense transformers with interleaved sliding-window and full softmax-attention layers (OLMo 3), and dense transformers with interleaved Gated DeltaNet linear-attention and full softmax-attention layers (Qwen 3.5).

The table reports the number of decoder layers $L$, the hidden dimension $d$, and the feed-forward intermediate dimension $\dffn$, which corresponds to the size of a single expert's MLP for mixture-of-experts (\moe) models and to the dense MLP
otherwise. Attention head counts are given as $h_q/h_{kv}$, where equal values
indicate standard multi-head attention and $h_{kv}<h_q$ indicates grouped-query
attention. The vocabulary size is denoted by $|\vocab|$ and the ``Tied'' column indicates whether the input embedding matrix is shared with the output projection. The ``Type'' column reports the type of attention if full softmax-attention is not used in all layers (full softmax-attention interleaved with sliding-window attention iSWA) or interleaved with Gated DeltaNet (iGDN) linear-attention (GDN)), and the type of feed-forward network (Dense or \moe). For \moe models we additionally list the total and active expert counts per layer, $\Etot/\Eact$, where $\Eact$ is the top-$k$ routing parameter. For \moe models we further report active parameters per
token, computed analytically from the configuration as
$|\vocab|\cdot d + L\cdot(P_{\mathrm{attn}} + \Eact\cdot
P_{\mathrm{expert}} + P_{\mathrm{router}})$, where $P_{\mathrm{attn}}$ is the
per-layer attention parameter count, $P_{\mathrm{expert}} = 3 d \cdot
\dffn$ is the per-expert SwiGLU MLP parameter count, and
$P_{\mathrm{router}} = d \cdot \Etot$ is the per-layer router
parameter count.

\paragraph{Development models.} The following seven model configurations were used during method development (hyperparameter selection, algorithmic choices, and prompt design) and are held out from the validation set reported in \Cref{sec:eval}:
\begin{itemize}[leftmargin=*]
	\item \texttt{HuggingFaceTB/SmolLM2-360M-Instruct}
	\item \texttt{HuggingFaceTB/SmolLM2-1.7B-Instruct}
	\item \texttt{HuggingFaceTB/SmolLM3-3B}
	\item \texttt{Qwen/Qwen3.5-9B}
	\item \texttt{Qwen/Qwen3-0.6B}
	\item \texttt{ibm-granite/granite-3.1-1b-a400m-instruct}
	\item \texttt{ibm-granite/granite-3.1-2b-instruct}
\end{itemize}
Together with the 32 validation models, these account for the 39 model configurations referenced throughout the paper.

\paragraph{Reasoning mode.} For Qwen models that support a toggleable reasoning (``thinking'') mode (\textit{e.g.}, the Qwen 3 and Qwen 3.5 series), we disable reasoning mode for all experiments by passing \texttt{enable\_thinking=False} to the chat template. This ensures that output tokens are sampled directly from the model's next-token distribution rather than from a post-reasoning distribution, keeping the log-probability and timing measurements consistent with the non-reasoning models in our suite.\\

\definecolor{familyqwen25}{RGB}{255,243,224}
\definecolor{familyqwen3}{RGB}{255,224,236}
\definecolor{familyqwen35}{RGB}{243,224,255}
\definecolor{familygranite}{RGB}{224,242,255}
\definecolor{familygranite4}{RGB}{208,232,255}
\definecolor{familyllama}{RGB}{224,255,229}
\definecolor{familyolmo}{RGB}{255,237,213}
\definecolor{familysmol}{RGB}{255,224,224}
\definecolor{familygpt}{RGB}{232,232,232}

\begin{table}[htpb]
	\centering
	\caption{Architectural details of the language models evaluated in experiments.}
	\label{tab:model-architectures}
	\setlength{\tabcolsep}{3.0pt}
	\renewcommand{\arraystretch}{1.0}
	\scriptsize
	\begin{tabular}{@{}lccccccccr@{}}
		\toprule
		\textbf{Model} & \textbf{$L$} & \textbf{$d$} & \textbf{$\dffn$} & \textbf{$h_q/h_{kv}$} & \textbf{$|\vocab|$} & \textbf{Tied} & \textbf{Type} & \textbf{$\Etot/\Eact$} & \textbf{Params (B)} \\
		\midrule
		
		\rowcolor{familyqwen25}[\tabcolsep][\tabcolsep]\multicolumn{10}{@{}l}{\textit{Qwen 2.5 (\texttt{Qwen/Qwen2.5-*-Instruct})}} \\
		\rowcolor{familyqwen25}[\tabcolsep][\tabcolsep] \quad 0.5B  & 24 &   896 &  4{,}864 & 14/2 & 151{,}936 & \checkmark & dense & --      & 0.49 \\
		\rowcolor{familyqwen25}[\tabcolsep][\tabcolsep] \quad 1.5B  & 28 & 1{,}536 &  8{,}960 & 12/2 & 151{,}936 & \checkmark & dense & --      & 1.54 \\
		\rowcolor{familyqwen25}[\tabcolsep][\tabcolsep] \quad 3B    & 36 & 2{,}048 & 11{,}008 & 16/2 & 151{,}936 & \checkmark & dense & --      & 3.09 \\
		\rowcolor{familyqwen25}[\tabcolsep][\tabcolsep] \quad 7B    & 28 & 3{,}584 & 18{,}944 & 28/4 & 152{,}064 & --         & dense & --      & 7.62 \\
		
		\addlinespace[1pt]
		\rowcolor{familyqwen3}[\tabcolsep][\tabcolsep]\multicolumn{10}{@{}l}{\textit{Qwen 3 (\texttt{Qwen/Qwen3-*})}} \\
		\rowcolor{familyqwen3}[\tabcolsep][\tabcolsep] \quad 0.6B            & 28 & 1{,}024 &  3{,}072 & 16/8 & 151{,}936 & \checkmark & dense & --     & 0.60 \\
		\rowcolor{familyqwen3}[\tabcolsep][\tabcolsep] \quad 1.7B            & 28 & 2{,}048 &  6{,}144 & 16/8 & 151{,}936 & \checkmark & dense & --     & 1.72 \\
		\rowcolor{familyqwen3}[\tabcolsep][\tabcolsep] \quad 4B              & 36 & 2{,}560 &  9{,}728 & 32/8 & 151{,}936 & \checkmark & dense & --     & 4.02 \\
		\rowcolor{familyqwen3}[\tabcolsep][\tabcolsep] \quad 8B              & 36 & 4{,}096 & 12{,}288 & 32/8 & 151{,}936 & --         & dense & --     & 8.19 \\
		\rowcolor{familyqwen3}[\tabcolsep][\tabcolsep] \quad 30B-A3B         & 48 & 2{,}048 &     768  & 32/4 & 151{,}936 & --         & MoE   & 128/8  & 30.5\,/\,3.4 \\
		\rowcolor{familyqwen3}[\tabcolsep][\tabcolsep] \quad 30B-A3B-Inst.   & 48 & 2{,}048 &     768  & 32/4 & 151{,}936 & --         & MoE   & 128/8  & 30.5\,/\,3.4 \\
		
		\addlinespace[1pt]
		\rowcolor{familyqwen35}[\tabcolsep][\tabcolsep]\multicolumn{10}{@{}l}{\textit{Qwen 3.5 (\texttt{Qwen/Qwen3.5-*};}} \\
		\rowcolor{familyqwen35}[\tabcolsep][\tabcolsep] \quad 0.8B    & 24 & 1{,}024 &  3{,}584 &  8/2 & 248{,}320 & \checkmark & dense + GDN     & --      & 0.87 \\
		\rowcolor{familyqwen35}[\tabcolsep][\tabcolsep] \quad 2B      & 24 & 2{,}048 &  6{,}144 &  8/2 & 248{,}320 & \checkmark & dense + GDN     & --      & 2.27 \\
		\rowcolor{familyqwen35}[\tabcolsep][\tabcolsep] \quad 4B      & 32 & 2{,}560 &  9{,}216 & 16/4 & 248{,}320 & \checkmark & dense + GDN     & --      & 4.66 \\
		\rowcolor{familyqwen35}[\tabcolsep][\tabcolsep] \quad 9B      & 32 & 4{,}096 & 12{,}288 & 16/4 & 248{,}320 & --         & dense + GDN     & --      & 9.65 \\
		%\rowcolor{familyqwen35}[\tabcolsep][\tabcolsep] \quad 35B-A3B & 40 & 2{,}048 &     512  & 16/2 & 248{,}320 & --         & MoE+GDN & 256/8  & 35.95\,/\,2.93 \\
		
		\addlinespace[1pt]
		\rowcolor{familygranite}[\tabcolsep][\tabcolsep]\multicolumn{10}{@{}l}{\textit{IBM Granite 3.0--3.3 (\texttt{ibm-granite/granite-3.x-*-instruct})}} \\
		\rowcolor{familygranite}[\tabcolsep][\tabcolsep] \quad 3.0-1B-A400M  & 24 & 1{,}024 &    512 & 16/8 & 49{,}155 & \checkmark & MoE   & 32/8  & 1.33\,/\,0.43 \\
		\rowcolor{familygranite}[\tabcolsep][\tabcolsep] \quad 3.0-3B-A800M  & 32 & 1{,}536 &    512 & 24/8 & 49{,}155 & \checkmark & MoE   & 40/8  & 3.37\,/\,0.88 \\
		\rowcolor{familygranite}[\tabcolsep][\tabcolsep] \quad 3.0-2B        & 40 & 2{,}048 &  8{,}192 & 32/8 & 49{,}155 & \checkmark & dense & --    & 2.63 \\
		\rowcolor{familygranite}[\tabcolsep][\tabcolsep] \quad 3.0-8B        & 40 & 4{,}096 & 12{,}800 & 32/8 & 49{,}155 & \checkmark & dense & --    & 8.17 \\
		\rowcolor{familygranite}[\tabcolsep][\tabcolsep] \quad 3.1-1B-A400M  & 24 & 1{,}024 &    512 & 16/8 & 49{,}155 & \checkmark & MoE   & 32/8  & 1.33\,/\,0.43 \\
		\rowcolor{familygranite}[\tabcolsep][\tabcolsep] \quad 3.1-3B-A800M  & 32 & 1{,}536 &    512 & 24/8 & 49{,}155 & \checkmark & MoE   & 40/8  & 3.30\,/\,0.88 \\
		\rowcolor{familygranite}[\tabcolsep][\tabcolsep] \quad 3.1-2B        & 40 & 2{,}048 &  8{,}192 & 32/8 & 49{,}155 & \checkmark & dense & --    & 2.53 \\
		\rowcolor{familygranite}[\tabcolsep][\tabcolsep] \quad 3.1-8B        & 40 & 4{,}096 & 12{,}800 & 32/8 & 49{,}155 & \checkmark & dense & --    & 8.17 \\
		\rowcolor{familygranite}[\tabcolsep][\tabcolsep] \quad 3.2-2B        & 40 & 2{,}048 &  8{,}192 & 32/8 & 49{,}155 & \checkmark & dense & --    & 2.53 \\
		\rowcolor{familygranite}[\tabcolsep][\tabcolsep] \quad 3.2-8B        & 40 & 4{,}096 & 12{,}800 & 32/8 & 49{,}155 & \checkmark & dense & --    & 8.17 \\
		\rowcolor{familygranite}[\tabcolsep][\tabcolsep] \quad 3.3-2B        & 40 & 2{,}048 &  8{,}192 & 32/8 & 49{,}159 & \checkmark & dense & --    & 2.53 \\
		\rowcolor{familygranite}[\tabcolsep][\tabcolsep] \quad 3.3-8B        & 40 & 4{,}096 & 12{,}800 & 32/8 & 49{,}159 & \checkmark & dense & --    & 8.17 \\
		
		\addlinespace[1pt]
		\rowcolor{familygranite4}[\tabcolsep][\tabcolsep]\multicolumn{10}{@{}l}{\textit{IBM Granite 4.0 (\texttt{ibm-granite/granite-4.0-*}; }} \\
		\rowcolor{familygranite4}[\tabcolsep][\tabcolsep] \quad 350M  & 28 & 1{,}024 & 2{,}048 & 16/4 & 100{,}352 & \checkmark & hybrid & -- & 0.35 \\
		\rowcolor{familygranite4}[\tabcolsep][\tabcolsep] \quad Micro & 40 & 2{,}560 & 8{,}192 & 40/8 & 100{,}352 & \checkmark & hybrid & -- & 3.40 \\
		
		\addlinespace[1pt]
		\rowcolor{familyllama}[\tabcolsep][\tabcolsep]\multicolumn{10}{@{}l}{\textit{Meta Llama 3.x (\texttt{meta-llama/Llama-3.x-*-Instruct})}} \\
		\rowcolor{familyllama}[\tabcolsep][\tabcolsep] \quad 3.2-1B  & 16 & 2{,}048 &  8{,}192 & 32/8 & 128{,}256 & \checkmark & dense & -- &  1.24 \\
		\rowcolor{familyllama}[\tabcolsep][\tabcolsep] \quad 3.2-3B  & 28 & 3{,}072 &  8{,}192 & 24/8 & 128{,}256 & \checkmark & dense & -- &  3.21 \\
		\rowcolor{familyllama}[\tabcolsep][\tabcolsep] \quad 3.1-8B  & 32 & 4{,}096 & 14{,}336 & 32/8 & 128{,}256 & --         & dense & -- &  8.03 \\

		\addlinespace[1pt]
		\rowcolor{familyolmo}[\tabcolsep][\tabcolsep]\multicolumn{10}{@{}l}{\textit{AI2 OLMo (\texttt{allenai/OLMo*-*-Instruct})}} \\
		\rowcolor{familyolmo}[\tabcolsep][\tabcolsep] \quad OLMo-2-1B (0425)    & 16 & 2{,}048 &  8{,}192 & 16/16 & 100{,}352 & -- & dense & --    & 1.48 \\
		\rowcolor{familyolmo}[\tabcolsep][\tabcolsep] \quad OLMo-2-7B (1124)    & 32 & 4{,}096 & 11{,}008 & 32/32 & 100{,}352 & -- & dense & --    & 7.30 \\
		\rowcolor{familyolmo}[\tabcolsep][\tabcolsep] \quad OLMoE-1B-7B (0924)  & 16 & 2{,}048 &  1{,}024 & 16/16 &  50{,}304 & -- & MoE   & 64/8  & 6.92\,/\,1.28 \\
		\rowcolor{familyolmo}[\tabcolsep][\tabcolsep] \quad OLMo-3-7B           & 32 & 4{,}096 & 11{,}008 & 32/32 & 100{,}278 & -- & dense + SWA & --    & 7.30 \\
		
		\addlinespace[1pt]
		\rowcolor{familysmol}[\tabcolsep][\tabcolsep]\multicolumn{10}{@{}l}{\textit{HuggingFace SmolLM (\texttt{HuggingFaceTB/SmolLM*-*})}} \\
		\rowcolor{familysmol}[\tabcolsep][\tabcolsep] \quad SmolLM2-135M  & 30 &   576 &  1{,}536 &  9/3   &  49{,}152 & \checkmark & dense & -- & 0.13 \\
		\rowcolor{familysmol}[\tabcolsep][\tabcolsep] \quad SmolLM2-360M  & 32 &   960 &  2{,}560 & 15/5   &  49{,}152 & \checkmark & dense & -- & 0.36 \\
		\rowcolor{familysmol}[\tabcolsep][\tabcolsep] \quad SmolLM2-1.7B  & 24 & 2{,}048 &  8{,}192 & 32/32 &  49{,}152 & \checkmark & dense & -- & 1.71 \\
		\rowcolor{familysmol}[\tabcolsep][\tabcolsep] \quad SmolLM3-3B    & 36 & 2{,}048 & 11{,}008 & 16/4  & 128{,}256 & \checkmark & dense & -- & 3.08 \\

		\bottomrule
	\end{tabular}
\end{table}

\clearpage
\section{Experimental Setup Details}
\label{app:timing-experimental-setup}
\paragraph{Hardware}
Each model runs on a single NVIDIA H100 (RHEL~8 / glibc 2.28; CUDA~12.6; PyTorch with CUDA~12.6; Python~3.12). Timing measurements are additionally repeated on H200 and A100 GPUs to test cross-platform calibration.

\textbf{Timing protocol.} For each model, we build synthetic prompts of $\ell$ tokens by repeating a fixed passage. After three warmup requests, we issue $100$ batch-size-one chat requests generating one output token, measuring client-side latency via \texttt{time.time()} and subtracting an $\ell{=}1$ baseline to isolate the prefill cost. At each $\ell$ we report the $10\%$-trimmed mean over $N{=}100$ trials, weighting regressions by $\sigma_T = \mathrm{std}/\sqrt{N_{\text{kept}}}$. We sweep $n=24$ lengths spanning the projection-dominated ($O(L\ell d^2)$) to attention-dominated ($O(L\ell^2 d)$) transition, with crossover $\ell^{\star} = \lfloor(d + \dkv + \tfrac{3}{2} \dffn) r \rfloor$, where $\dkv = \nkv(d/\nheads)$ and $r \approx 0.6$ is the empirical attention-to-MLP throughput ratio; points span $[0.5\ell^{\star},\, \min(3\ell^{\star}, \ell_{\mathrm{cap}})]$ with $\ell_{\mathrm{cap}} = \min(\max(2\ell^{\star}, 4d), p_{\max}-200)$, and we discard 12 points to restrict the fit to the attention-dominated regime.

\paragraph{Sequence-length sweep}
We sweep input lengths $\mathcal{L} = \{\ell_j\}_{j=1}^{N'}$ chosen to span the transition between the projection-dominated ($O(L\,\ell\,d^2)$) and attention-dominated ($O(L\,\ell^2\,d)$) regimes. We estimate the crossover as
\begin{equation}
    \ell^{\star} = \left\lfloor\left(d + \dkv
    + \tfrac{3}{2}\, \dffn\right) \cdot r \right\rfloor,
    \label{eq:crossover}
\end{equation}
where $\dkv = \nkv \cdot (d / \nheads)$ with $\nheads$ attention heads and $\nkv$ KV heads, $\dffn$ is the feed-forward intermediate dimension, and $r \approx 0.6$ reflects the empirical attention-to-MLP throughput ratio. We sweep $n = 24$ uniformly spaced points from $\ell_{\mathrm{lo}} = 0.5\, \ell^{\star}$ to $\ell_{\mathrm{hi}} = \min(3\, \ell^{\star},\, \ell_{\mathrm{cap}})$, where $\ell_{\mathrm{cap}} = \min(\max(2\, \ell^{\star},\, 4d),\; p_{\max} - 200)$ keeps sequences within the positional encoding limit $p_{\max}$, and additionally include $\ell^{\star}$ when it falls inside the sweep. When fitting the scaling relation, we discard 12 points to restrict the fit to the attention-dominated regime, excluding sequences too short for prefill cost to dominate as well as outliers.

\clearpage
\section{Additional Results}
\label{app:additional_results}

\subsection{Spectral Shape and Eigenspectrum Saturation}
\label{app:spectral-shape}
\begin{figure}[htpb]
	\centering
	\includegraphics[width=11cm]{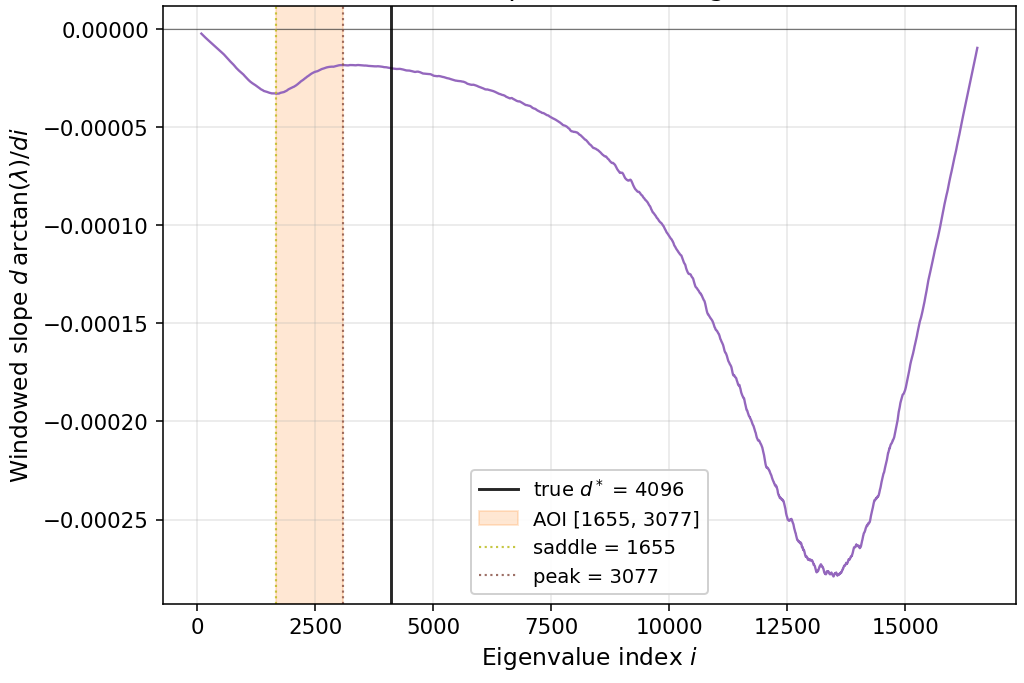}
	\caption{Windowed slope $d\arctan(\lambda)/di$ of the ordered eigenvalues for Llama-3.1-8B at the maximum sample budget ($N{=}10^{6}$ per prompt). The true hidden dimension $d^{*}{=}4{,}096$ is marked with a vertical line. The change-point detector (\Cref{app:changepoint}) localizes the elbow inside the area of interest (\aoi) bracketed by the saddle (1655) and peak (3077). Both bracket landmarks fall well below the true $d^{*}$, so the estimate $\dhat$ underestimates the rank: at this budget the eigenspectrum has not yet saturated, and the elbow associated with the true rank is not yet pronounced enough for the detector to recover it. Larger sample budgets are needed to push the \aoi up to $d^{*}$.}
	\label{fig:llama_8b_shape}
\end{figure}
\newpage
\subsection{Timing}
\label{app:additional_timing}
\begin{figure}[htpb]
	\centering
	\includegraphics[width=14cm]{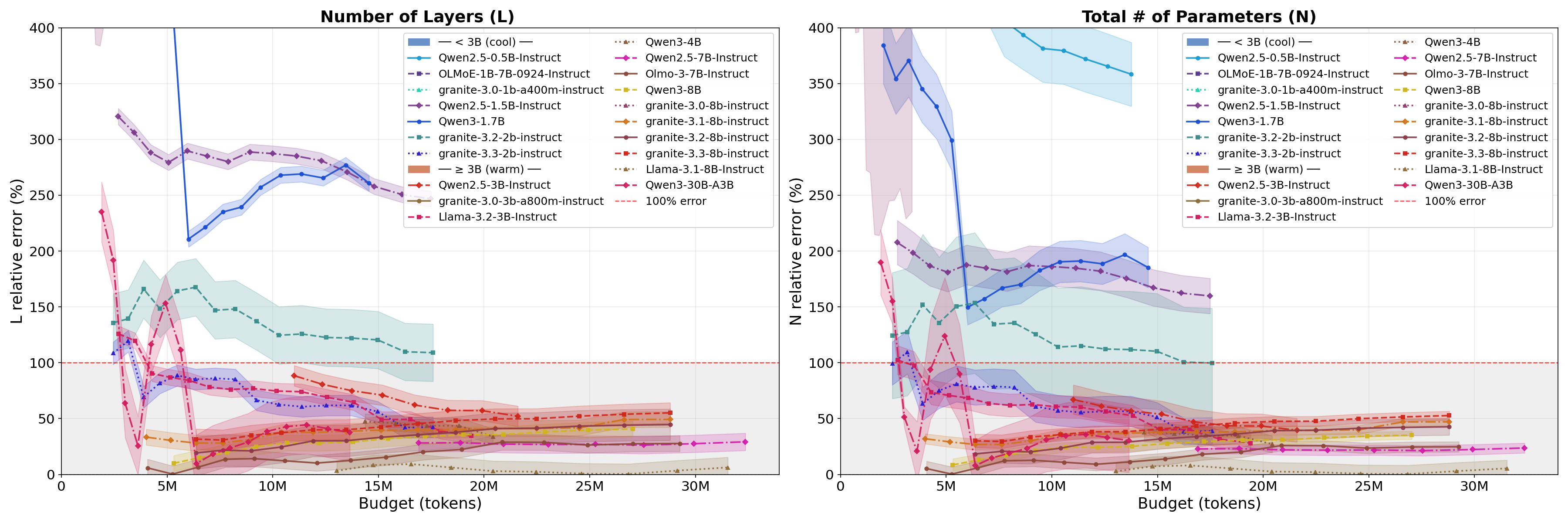}
	\caption{\small Relative error in estimating the number of layers and total number of parameters as a function of API query budget (measured in number of tokens). Some of the largest errors are driven by small models with $\leq$ 4B parameters.
	}
	\label{fig:L_param_err_budget_per_model}
\end{figure}

\begin{figure}[htpb]
	\centering
	\includegraphics[width=14cm]{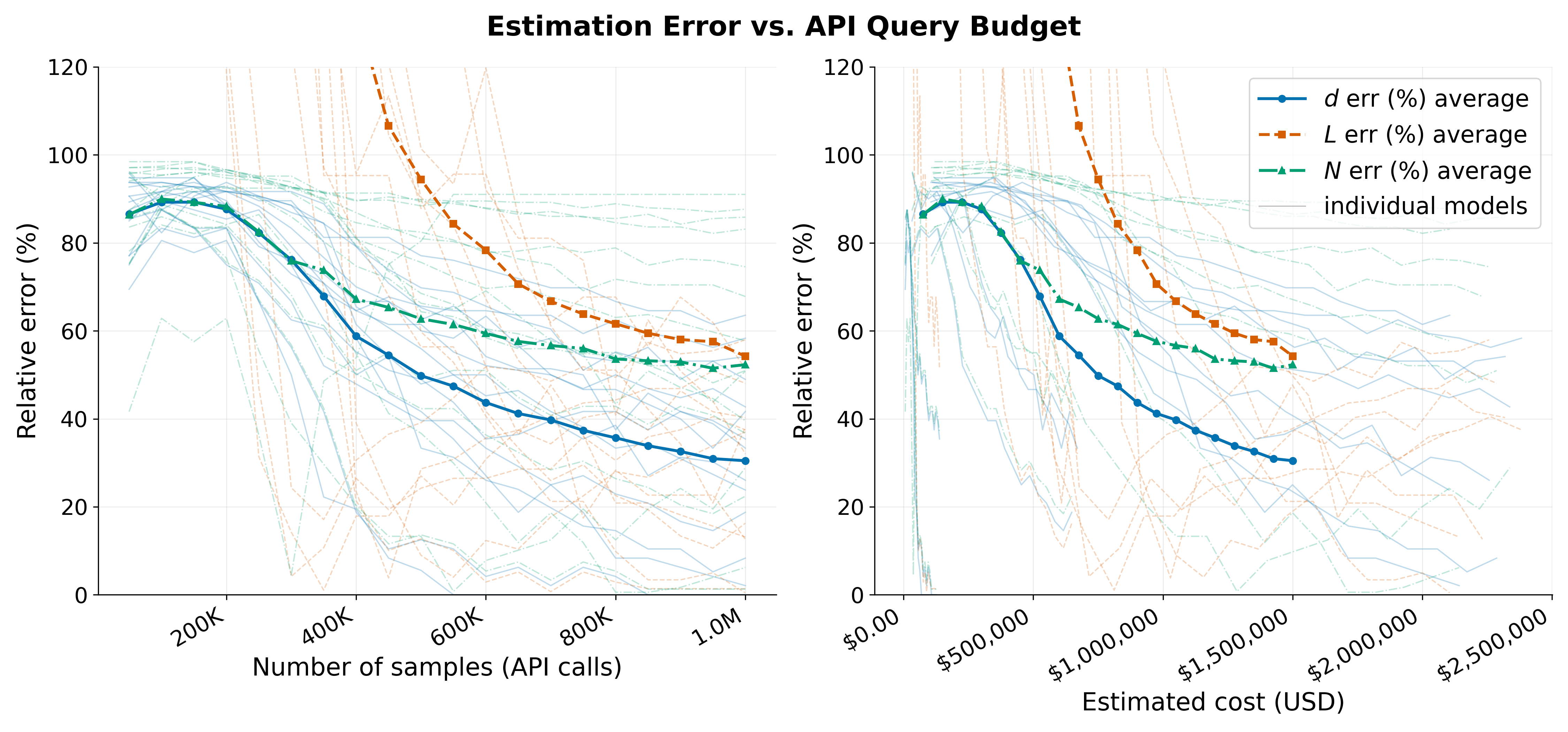}
	\caption{\small Estimation relative error versus number of API calls (left panel) and estimated cost in USD (right panel). Relative errors for d, L, and N are shown as a function of total token budget, with bold curves denoting averages across target models and faint curves denoting individual models. Larger budgets generally reduce error, with L improving fastest and N remaining harder to estimate. Results include $\geq$3B-parameter dense models and \moe models.
	}
	\label{fig:L_param_err_budget}
\end{figure}

\clearpage
\section{Proofs}
\label{app:proofs}

Here we prove \Cref{thm:uniform-sample-complexity} in \Cref{sec:theory} and provide
a generalization to the non-uniform case. Throughout, we let $\vocab$ be a finite
vocabulary of size $\vocabsize$, with tokens $v_1, \dots, v_{\vocabsize}$. We draw
$\inputprompts$ random subsets
$\tokenset_1, \tokenset_2, \dots, \tokenset_{\inputprompts} \subseteq \vocab$ as
follows: for each $q \in [\inputprompts]$ we sample $N$ tokens from $\vocab$
independently and with replacement, and let $\tokenset_q$ be the set of distinct
tokens obtained. The $q$-th subset is in general drawn according to its own
probability distribution over $\vocab$,
\begin{equation}
  \pvec_q = \left(p^{(1)}_q, p^{(2)}_q, \dots, p^{(\vocabsize)}_q\right),
  \qquad \sum_{i=1}^{\vocabsize} p^{(i)}_q = 1 .
\end{equation}
The subscript $q$
indexes the subset (which of the $\inputprompts$ distributions is in force) while the
parenthesized superscript indexes the token; hence $p^{(i)}_q$ is the
probability of drawing token $v_i$ when forming $\tokenset_q$. The uniform case of
\Cref{thm:uniform-sample-complexity} is recovered by taking every distribution to be
uniform over $\vocab$, i.e.\ $p^{(i)}_q = 1/\vocabsize$ for every token $v_i$,
$i \in [\vocabsize]$, and every subset $q \in [\inputprompts]$. We drop the subscript
$q$ whenever the distribution does not depend on $q$, writing
$\pvec = (p^{(1)}, \dots, p^{(\vocabsize)})$. Finally, we denote by
$\commonset \triangleq \bigcap_{q \in [\inputprompts]} \tokenset_q$ the common
intersection of the $\inputprompts$ subsets, i.e., the tokens that appear in every
subset.

\begin{lemma}[Negative association of intersection indicators]
\label{lem:na-indicators}
Let $X_i \triangleq \mathbf{1}(v_i \in \commonset) \in \{0,1\}$ be the indicator random variable for the event that token $v_i$ is in the common token set $\commonset$.
Then $X_1, \dots, X_V$ are negatively associated.
\end{lemma}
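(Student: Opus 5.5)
The plan is to build negative association from the standard balls-and-bins result and then close under the operations that preserve NA, following \citet{JoagProschan83} and \citet{dubhashi1996balls}.

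\textbf{Step 1: a single subset.} Fix $q \in [\inputprompts]$. Let $B^{(i)}_q$ be the number of the $N$ draws for $\tokenset_q$ that land on token $v_i$. The vector $(B^{(1)}_q,\dots,B^{(\vocabsize)}_q)$ is multinomial. The multinomial is negatively associated, either as a classical result of \citet{JoagProschan83} or by the balls-and-bins theorem of \citet{dubhashi1996balls}. To see this directly, write each draw as a one-hot vector. The coordinates of a single one-hot (zero-one) vector summing to one are NA (the ``zero-one lemma''). Independent NA vectors are jointly NA, and $B^{(i)}_q$ is a sum of disjoint-in-draw coordinates, so the counts are NA.

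\textbf{Step 2: per-subset indicators.} Define $Y^{(i)}_q \triangleq \mathbf{1}(B^{(i)}_q \ge 1) = \mathbf{1}(v_i\in\tokenset_q)$. Each $Y^{(i)}_q$ is a nondecreasing function of a distinct coordinate $B^{(i)}_q$. By the closure property that nondecreasing functions of disjoint subsets of NA variables are NA, the family $(Y^{(1)}_q,\dots,Y^{(\vocabsize)}_q)$ is NA for each fixed $q$.

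\textbf{Step 3: combine across subsets.} The $\inputprompts$ subsets are drawn independently. The union of independent NA families is NA, so the whole array $\{Y^{(i)}_q\}_{i\in[\vocabsize],\,q\in[\inputprompts]}$ is NA. Now $X_i = \prod_{q=1}^{\inputprompts} Y^{(i)}_q = \min_q Y^{(i)}_q$. This is a nondecreasing function of the block $\{Y^{(i)}_q\}_q$, and the blocks for different $i$ are disjoint. Applying the same closure property again shows that $X_1,\dots,X_{\vocabsize}$ are NA. This argument never uses uniformity, so it covers the general $\pvec_q$ setting as well.

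\textbf{Main obstacle.} The crux is Step 1, establishing NA of the multinomial occupancy counts. I would simply cite \citet{dubhashi1996balls} (Theorem on occupancy numbers) rather than reprove it. The point requiring care is that the monotone-function closure property needs the functions to be all nondecreasing (or all nonincreasing) and applied to \emph{disjoint} index sets. This holds here because each $X_i$ depends only on the $i$-th column of the array.
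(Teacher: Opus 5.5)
Your proof is correct and follows essentially the same route as the paper: the zero-one lemma for each one-hot draw, closure of NA under unions of independent families, and closure under nondecreasing functions of disjoint blocks, where $X_i$ depends only on coordinates indexed by token $v_i$. The only difference is staging. The paper unions all $N\inputprompts$ draws at once and writes $X_i = \prod_{q} \max_{k} w^{(i)}_{q,k}$ directly, whereas you pass through the multinomial counts and per-subset indicators; since compositions of nondecreasing maps are nondecreasing, this reduces to the same argument.
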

\begin{proof}
A collection of random variables $(X_1, \dots, X_m)$ is \emph{negatively associated}
(NA) if for every pair of disjoint index sets $I, J \subseteq [m]$ and every pair of
functions $f, g$ that are both non-decreasing (or both non-increasing),
\[
    \E\left[f(X_i, i \in I)\, g(X_j, j \in J)\right]
    \;\leq\; \E\left[f(X_i, i \in I)\right]\,\E\left[g(X_j, j \in J)\right].
\]

NA captures the idea that the variables compete: when one subset of
them takes on larger values, any disjoint subset tends to take on smaller values, so
they cannot all be large simultaneously. NA is stronger than negative correlation. Correlation tracks how variables move in pairs, whereas NA says that any group of them competes with any disjoint group. So NA variables are always negatively correlated, but variables can be negatively correlated in pairs without exhibiting this fuller competition. We refer the reader to~\citet{dubhashi1996balls} for additional background.

In our setting, we perform $N$ draws in each of the $\inputprompts$ subsets, thus $N\inputprompts$ draws in total. For each $q \in [\inputprompts]$ and
$k \in [N]$, let $\mathbf{w}_{q,k} \in \{0,1\}^{\vocabsize}$ be the one-hot encoding
of the $k$-th token drawn in subset $q$. Its coordinate $w_{q,k}^{(i)}$ indicates
whether token $v_i$ was the one drawn on the $k$-th draw of subset $q$. By
construction $\sum_{i=1}^{\vocabsize} w_{q,k}^{(i)} = 1$ with each
$w_{q,k}^{(i)} \in \{0,1\}$, so by Lemma~8 of~\citet{dubhashi1996balls} the entries of
$\mathbf{w}_{q,k}$ are NA. 

Next, we show that NA of $\mathbf{w}_{q,k}$ implies NA of the indicators $(X_1, \dots, X_{\vocabsize})$, where
$X_i \triangleq \mathbf{1}(v_i \in \commonset)$ records whether token $v_i$ is drawn
at least once in every one of the $\inputprompts$ subsets. The indicators can be
expressed in terms of the one-hot encodings as
\begin{align}
    X_i &= \prod_{q=1}^{\inputprompts} \max_{k \in [N]} w_{q,k}^{(i)},
\end{align}
since $\max_{k \in [N]} w_{q,k}^{(i)} = 1$ exactly when $v_i \in \tokenset_q$, and the
product over $q$ is $1$ exactly when $v_i \in \tokenset_q$ for all $q$, i.e.\
$v_i \in \commonset$.

Each $\mathbf{w}_{q,k}$ is a deterministic function of a single draw, and the
$N\inputprompts$ draws are mutually independent, so the vectors
$\{\mathbf{w}_{q,k}\}$ are mutually independent. Combining the within-draw NA of each $\mathbf{w}_{q,k}$ with this independence, the union of
independent sets of NA random variables is again NA by property~7
of~\citet{JoagProschan83}, so the full collection
$\{w_{q,k}^{(i)}\}_{i,q,k}$ is NA. Each indicator $X_i$ depends only on the
coordinates $\{w_{q,k}^{(i)} : q \in [\inputprompts], k \in [N]\}$ indexed by token
$v_i$, and these subsets are disjoint across distinct $i$. Since $\max$ and products
of $\{0,1\}$-valued terms are non-decreasing, each $X_i$ is a non-decreasing function
defined over disjoint subsets of NA random variables, and hence the indicators are NA
by property~6 of~\citet{JoagProschan83}.
\end{proof}

\SampleComplexity*
\begin{proof}
	The intersection size can be expressed as
	$C \triangleq |\commonset| = \sum_{i=1}^{\vocabsize} X_i$, where
	$X_i = \mathbf{1}(v_i \in \commonset)$ as in \Cref{lem:na-indicators}.

	\emph{Step 1: Expected intersection size.} By independence across the
	$\inputprompts$ subsets,
	\begin{align}
		\mu \triangleq \E[C]
		= \sum_{i=1}^{\vocabsize} \E[X_i]
		= \sum_{i=1}^{\vocabsize} \prod_{q=1}^{\inputprompts}
		  \P(v_i \in \tokenset_q).
	\end{align}
	Since each draw within $\tokenset_q$ samples token $v_i$ uniformly with
	probability $1/\vocabsize$, the probability that $v_i$ never appears in $N$
	draws is $(1 - 1/\vocabsize)^N$, so
	$\P(v_i \in \tokenset_q) = 1 - (1 - 1/\vocabsize)^N$. Substituting gives
	\begin{align}
		\mu = \vocabsize\bigl(1 - (1 - 1/\vocabsize)^N\bigr)^{\inputprompts}.
		\label{eq:mu-uniform}
	\end{align}

	\emph{Step 2: Concentration via a Chernoff bound for NA variables.} By~\Cref{lem:na-indicators}, $C$ is a sum of the NA indicators $X_1, \dots, X_{\vocabsize} \in \{0,1\}$. The multiplicative Chernoff lower-tail bound holds exactly as in the
	independent case~\citep{dubhashi1996balls}. Thus for any $\varepsilon \in (0,1)$,
	\begin{align}
		\P\bigl(C \leq (1 - \varepsilon)\mu\bigr)
		\leq \exp\!\left(-\frac{\varepsilon^2 \mu}{2}\right).
		\label{eq:chernoff}
	\end{align}
	Setting the right-hand side equal to $\delta$ gives
	$\varepsilon = \sqrt{2\ln(1/\delta)/\mu}$, so with probability at least
	$1 - \delta$,
	\begin{align}
		C \geq \mu - \sqrt{2\mu \ln(1/\delta)}.
		\label{eq:lower-tail}
	\end{align}

	\emph{Step 3: Sufficient condition on $\mu$.} It therefore suffices to ensure
	$\mu - \sqrt{2\mu \ln(1/\delta)} \geq d$. Writing $u = \sqrt{\mu}$ and
	$a = \sqrt{2\ln(1/\delta)}$, this is the quadratic inequality
	$u^2 - a u - d \geq 0$, whose positive root gives
	$u \geq \tfrac{1}{2}(a + \sqrt{a^2 + 4d})$. Squaring and using
	$\sqrt{a^2 + 4d} \leq a + 2\sqrt{d}$,
	\begin{align}
		\mu \geq d + \sqrt{2d\ln(1/\delta)} + 2\ln(1/\delta).
		\label{eq:mu-condition}
	\end{align}

	\emph{Step 4: Solving for $N$.} Combining \eqref{eq:mu-uniform} and
	\eqref{eq:mu-condition}, it suffices that
	\begin{align}
		\vocabsize\bigl(1 - (1 - 1/\vocabsize)^N\bigr)^{\inputprompts}
		\geq d + \sqrt{2d\ln(1/\delta)} + 2\ln(1/\delta).
	\end{align}
	Solving for $N$,
	\begin{align}
		N \geq \frac{\ln\!\left[1 - \left(\dfrac{d + \sqrt{2d\ln(1/\delta)}
		+ 2\ln(1/\delta)}{\vocabsize}\right)^{1/\inputprompts}\right]}
		{\ln(1 - 1/\vocabsize)}.
	\end{align}
\end{proof}

\paragraph{Interpretation of the bound.} Here we interpret sample complexity provided by the bound in~\Cref{thm:uniform-sample-complexity}. Let $m$ be the required value of the required value of $\mu$ from \eqref{eq:mu-condition}, namely
\[
    m \;\triangleq\; d + \sqrt{2d\ln(1/\delta)} + 2\ln(1/\delta),
\] 
so that the sufficient
condition is $N \ge N^\star$ with
\[
    N^\star \;\triangleq\;
    \frac{-\ln\!\left[1 - (m/\vocabsize)^{1/\inputprompts}\right]}
         {-\ln(1 - 1/\vocabsize)}.
\]
We proceed bound the numerator and denominator. The denominator satisfies
$-\ln(1 - 1/\vocabsize) \ge 1/\vocabsize$, so its reciprocal is at most $\vocabsize$.
For the numerator, let $(m/\vocabsize)^{1/\inputprompts} = e^{-t}$ with
$t \triangleq \tfrac{1}{\inputprompts}\ln\tfrac{\vocabsize}{m}$. The inequality
$1 - e^{-t} \ge \tfrac{t}{1+t}$ gives
$-\ln(1 - e^{-t}) \le \ln(1 + 1/t)$, and in the regime $m \le \vocabsize/e$ we have
$t \ge 1/\inputprompts$, hence $1/t \le \inputprompts$. Combining,
\[
    N^\star \;\le\; \vocabsize \, \ln(1 + \inputprompts)
    \;=\; O(\vocabsize \ln \inputprompts),
\]
so it suffices to take $N = O(\vocabsize \ln \inputprompts)$ samples per subset, for a
total query budget
$\inputprompts N = O(\vocabsize\,\inputprompts \ln \inputprompts)$.

\Cref{thm:uniform-sample-complexity} assumes the output distribution over tokens is exactly uniform. In practice, the proposed common-set prompting search method finds prompts whose resulting output distributions are approximately, but not perfectly, uniform in the high temperature. We subsequently generalize the theorem by introducing a notion of $t$-flatness, which captures the degree to which an output distribution is spread across a sub-vocabulary of interest.
\begin{restatable}{definition}{tflat}[$t$-flat distribution]
	\label{def:t-flat}
	A probability distribution $\pvec = (p^{(1)}, \dots, p^{(\vocabsize)})$ over a
	vocabulary $\vocab$ is \emph{$t$-flat over $\vocab' \subseteq \vocab$} if
	$p^{(i)} \geq t$ for all $v_i \in \vocab'$.
\end{restatable}

Equipped with this definition, we generalize
\Cref{thm:uniform-sample-complexity} to the non-uniform setting where each of the $\inputprompts$ subsets may be drawn
from its own $\pvec_q$, provided they share a common floor $t$ on the sub-vocabulary $\vocab'$ of interest.

\begin{restatable}{corollary}{tflatsc}[Sample complexity under $t$-flat distributions]
	\label{cor:t-flat-sample-complexity}
	Let $\tokenset_1, \dots, \tokenset_{\inputprompts}$ be $\inputprompts$ independent
	random subsets of $\vocab$, each formed by drawing $N$ tokens with replacement from
	its own distribution $\pvec_q$, where every $\pvec_q$ is $t$-flat over a common
	sub-vocabulary $\vocab' \subseteq \vocab$. If
	\begin{equation}
		N \geq \frac{\ln\!\left[1 - \left(\frac{d + \sqrt{2d\ln(1/\delta)} + 2\ln(1/\delta)}{|\vocab'|}\right)^{1/\inputprompts}\right]}{\ln(1 - t)}
	\end{equation}
	then $|\commonset| \geq d$ with probability at least $1 - \delta$.
\end{restatable}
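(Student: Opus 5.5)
The plan is to repeat the proof of \Cref{thm:uniform-sample-complexity}, but to sum only over tokens in $\vocab'$ and to replace the exact uniform inclusion probability with a lower bound that follows from $t$-flatness. Let $C' \triangleq |\commonset \cap \vocab'| = \sum_{v_i \in \vocab'} X_i$. Since $|\commonset| \ge C'$, it suffices to show that $C' \ge d$ with probability at least $1-\delta$. \Cref{lem:na-indicators} never used uniformity: the one-hot NA property holds for any categorical distribution, and each $X_i$ is a monotone function of disjoint blocks of coordinates. So $X_1,\dots,X_{\vocabsize}$ are NA for arbitrary per-subset distributions $\pvec_q$. The subfamily $\{X_i : v_i \in \vocab'\}$ is then also NA, because restriction to a subset preserves NA.

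\emph{Step 1 (mean lower bound).} By independence across subsets,
\[
\mu' \triangleq \E[C'] = \sum_{v_i\in\vocab'} \prod_{q=1}^{\inputprompts}\bigl(1-(1-p_q^{(i)})^N\bigr).
\]
Since $p_q^{(i)} \ge t$ for every $v_i \in \vocab'$ and every $q$, each factor is at least $1-(1-t)^N$. This gives
\[
\mu' \ge \underline{\mu} \triangleq |\vocab'|\bigl(1-(1-t)^N\bigr)^{\inputprompts}.
\]

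\emph{Step 2 (concentration).} Apply the NA lower-tail Chernoff bound exactly as in \eqref{eq:chernoff}. This is the step that needs care, because the bound must be stated in terms of the true mean $\mu'$ while we only control the lower bound $\underline{\mu}$. I would work with the function $\mu \mapsto \mu - \sqrt{2\mu\ln(1/\delta)}$, which is nondecreasing for $\mu \ge \ln(1/\delta)/2$. Under the hypothesis we will have $\mu' \ge \underline\mu \ge m \triangleq d+\sqrt{2d\ln(1/\delta)}+2\ln(1/\delta) \ge 2\ln(1/\delta)$, so we are in the monotone regime. The event $C' \ge \mu' - \sqrt{2\mu'\ln(1/\delta)}$ holds with probability at least $1-\delta$, and on it monotonicity gives
\[
C' \ge \underline\mu - \sqrt{2\underline\mu\ln(1/\delta)}.
\]
As an alternative, one can invoke the additive Chernoff form $\P(C' \le \mu' - s) \le \exp(-s^2/(2\mu'))$ directly with $\mu'$. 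Either route gives the same conclusion.

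\emph{Steps 3--4 (algebra).} The quadratic argument behind \eqref{eq:mu-condition} shows that $\underline\mu \ge m$ implies $\underline\mu - \sqrt{2\underline\mu\ln(1/\delta)} \ge d$. So it suffices that
\[
|\vocab'|\bigl(1-(1-t)^N\bigr)^{\inputprompts} \ge m,
\]
that is, $(1-t)^N \le 1-(m/|\vocab'|)^{1/\inputprompts}$. Taking logarithms and dividing by $\ln(1-t)<0$, which flips the inequality, yields exactly the stated condition on $N$. This requires $m \le |\vocab'|$ so that the logarithm is defined; this is implicit in the statement. The only genuine obstacle is the monotonicity and NA-restriction bookkeeping in Step 2; everything else is a verbatim transcription of the uniform proof with $\vocabsize \to |\vocab'|$ and $1/\vocabsize \to t$.
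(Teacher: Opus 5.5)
Your proposal is correct and follows the paper's proof essentially step for step. You restrict to $C'=\sum_{v_i\in\vocab'}X_i$, note that a subfamily of the NA family from \Cref{lem:na-indicators} is NA, lower-bound $\mu'$ by $|\vocab'|(1-(1-t)^N)^{D}$ via $t$-flatness, and invert. The only difference is that your monotonicity detour is unnecessary: the paper applies Steps 2--3 directly to the true mean $\mu'$ (so it suffices that $\mu'\ge m$) and then uses $\mu'\ge\underline{\mu}\ge m$.
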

\begin{proof}
	Let $C' \triangleq \sum_{v_i \in \vocab'} X_i$ be the intersection size restricted
	to $\vocab'$, with $X_i = \mathbf{1}(v_i \in \commonset)$ as in
	\Cref{lem:na-indicators}. Since $\vocab' \subseteq \vocab$, $C' \leq |\commonset|$,
	so it suffices to guarantee $C' \geq d$. The indicators $\{X_i : v_i \in \vocab'\}$
	are a subcollection of the NA family of \Cref{lem:na-indicators} and any subset of an NA family is NA. Steps~2 and 3 of the proof of
	\Cref{thm:uniform-sample-complexity} therefore apply to $C'$ verbatim and it is sufficient that $\mu' \triangleq \E[C'] \geq d + \sqrt{2d\ln(1/\delta)} + 2\ln(1/\delta)$. By independence across the $\inputprompts$ subsets and $t$-flatness, $p_q^{(i)} \geq t$ for every
	$q \in [\inputprompts]$ and $v_i \in \vocab'$, so
	\begin{align}
		\mu' = \sum_{v_i \in \vocab'} \prod_{q=1}^{\inputprompts}
		       \bigl(1 - (1 - p_q^{(i)})^N\bigr)
		\;\geq\; |\vocab'|\bigl(1 - (1 - t)^N\bigr)^{\inputprompts}.
	\end{align}
	Ensuring the right-hand side is at least
	$d + \sqrt{2d\ln(1/\delta)} + 2\ln(1/\delta)$ and solving for $N$ (as in Step~4 of
	\Cref{thm:uniform-sample-complexity}, using $\ln(1-t) < 0$) gives the stated bound.
\end{proof}

%%%%%%%%%%%%%%%%%%%%%%%%%%%%%%%%%%%%%%%%%%%%%%%%%%%%%%%%%%%%

\end{document}